\newcommand\blfootnote[1]{%
	\begingroup
	\renewcommand\thefootnote{}\footnote{#1}%
	\addtocounter{footnote}{-1}%
	\endgroup
}
\newcommand*{\argmin}{\operatornamewithlimits{argmin}\limits}
\newcommand\norm[1]{\left\lVert#1\right\rVert}
\newcommand\inner[1]{\left\langle#1\right\rangle}
\newcommand*{\RMSE}{\operatornamewithlimits{RMSE}\limits}
\newcommand*{\vol}{\operatornamewithlimits{Vol}\limits}
\renewcommand{\ALG@beginalgorithmic}{\small}
\newtheorem{theorem}{Theorem}
\newtheorem{Definition}{Definition}
\newtheorem{corollary}{Corollary}
\newtheorem{lemma}{Lemma}
\newtheorem{proposition}{Proposition}
\title{\Large Efficient Online Hyperparameter Optimization for Kernel Ridge Regression with Applications to Traffic Time Series Prediction}
\date{}
\author[1]{Hongyuan Zhan\thanks{Email: hongyuan.zhan@gmail.com}}
\author[2]{Gabriel Gomes}
\author[3]{Xiaoye S. Li}
\author[1]{\\Kamesh Madduri}
\author[3]{Kesheng Wu}
\affil[1]{Penn State University, Computer Science and Engineering}
\affil[2]{UC Berkeley, PATH}
\affil[3]{Lawrence Berkeley National Laboratory, Computational Research Division}
\begin{document}
\maketitle

\begin{abstract} 
\blfootnote{This manuscript is an extended version of the paper in \cite{ZhanItsc18}.}Computational efficiency is an important consideration for deploying machine learning models for time series prediction in an online setting. Machine learning algorithms adjust model parameters automatically based on the data, but often require users to set additional parameters, known as hyperparameters. Hyperparameters can significantly impact prediction accuracy. Traffic measurements, typically collected online by sensors, are serially correlated. Moreover, the data distribution may change gradually. A typical adaptation strategy is periodically re-tuning the model hyperparameters, at the cost of computational burden. In this work, we present an efficient and principled online hyperparameter optimization algorithm for Kernel Ridge regression applied to traffic prediction problems. In tests with real traffic measurement data, our approach requires as little as one-seventh of the computation time of other tuning methods, while achieving better or similar prediction accuracy.
\end{abstract}

\section{Introduction}
Modern sensors generate large amounts of timestamped measurement data.  
These data sets are critical in a wide
range of applications including traffic flow prediction, transportation management, GPS navigation, and city planning.
Machine learning-based prediction algorithms typically adjust their parameters automatically based on the data, but also require users to set additional parameters, known as hyperparameters. For example, in a kernel-based regression model, the (ordinary) parameters are the regression weights, whereas the hyperparameters include the kernel scales and regularization constants.

These hyperparameters have a strong influence on the prediction accuracy. Often, their values are set based on past experience or through time-consuming
grid searches.  In applications where the characteristics of the data
change, such as unusual traffic pattern due to upcoming concert events, these hyperparameters have to be adjusted dynamically in order to
maintain prediction quality. In this paper, we use the term hyperparameter learning, hyperparameter optimization, and hyperparameter selection/tuning interchangeably, referring to the process of configuring the model specification before model fitting.

Existing hyperparameter optimization approaches    \cite{bengio2000gradient,seegerNips2006,foo2008efficient,bergstra2011algorithms,bergstra2012random,maclaurin2015gradient,kandasamy2015high,pedregosa2016hyperparameter,luketina16,klein2016fast,jamieson2016non,franceschi17a,li2017hyperband} are designed for offline applications where the data are split into training and validation sets, making them unsuitable for online applications. Therefore, we aim to construct online hyperparameter learning strategies.

This work was motivated by online traffic flow prediction
problem. In this context as is many others, a set of learning
algorithms are used in the traffic stream prediction engine, and the
model hyperparameters are often reset periodically.  The model
re-training is scheduled according to the operation cycle.  We summarize
this deployment protocol in Algorithm~\ref{rollingprotocol} (for 1-step-ahead prediction due to simplicity, multi-steps-ahead are similar).  Under
this protocol, operators re-select the model hyperparameters every $n$
time-steps, and retrain the model every $m$ time-steps. Note that hyperparameter tuning is much more time-consuming than model fitting. For example, the widely-used grid search strategy selects different hyperparameter configurations based on trial-and-error over the validation data $V_t$ (line 4 in Algorithm \ref{rollingprotocol}). In each trial of hyperparameters, the model needs to be re-trained and re-evaluated.

\begin{algorithm}[!t]
	\caption{A \textbf{rolling} hyperparameter tuning and model fitting protocol for deploying machine learning model for time series prediction. For simplicity, we show the 1-step-ahead prediction setting here, but multistep settings are similar.}
	\label{rollingprotocol}
	\textbf{Input}: Model $\mathcal{M}$, hyperparameter tuning interval $n$ (time-steps), model fitting interval $m$ (time-steps).\\
	\textbf{Output}: Predictions $\hat{y}_{t}$, $t =$ $0$, $1$, $2$, $\dots$, $T$.
	\begin{algorithmic}[1]
		\For{$t$ $=$ $0$ to $T$}
		\If{$\left((t~\text{mod}~n) = 0\right)$ }
		\State $V_t \gets \text{Historical Data for Model Evaluation}$ 
		\State $\lambda \gets \text{Hyperparameter Tuning}(\mathcal{M},V_t)$ \Comment{\textbf{costly}}
		\EndIf
		\If{$\left((t~\text{mod}~m) = 0\right)$}
		\State $S_t \gets \text{Historical Data for Model Training}(t)$ 
		\State $\theta^*(\lambda) \gets \text{Train Model}( \mathcal{M}, S_t, \lambda)$ 
		\EndIf 
		\State $\hat{y}_{t} \gets \text{Predict}(\mathcal{M},\theta^*(\lambda))$
		\State $\text{Observe } y_t$
		\EndFor
	\end{algorithmic}
\end{algorithm}

The implication of the high computation cost of most hyperparameter tuning methods is that traffic controllers cannot afford frequent adjustments on hyperparameters. Since the distribution of traffic flow may change gradually, keeping the hyperparameters static may result in sub-optimal performance of the prediction model. However, traffic sensors collect data at a high frequency and the data stream arrives at the control center continuously, the serial correlation of measurements suggest there is a potential for optimizing the hyperparameters in an online manner. Therefore, this work proposes an online method for hyperparameter learning motivated by the need for efficient traffic time series prediction.

Online optimization \cite{zinkevich2003online,cesa2006prediction,hazan2012tutorial,hazan17nonconvex} emerged as powerful tools to reduce the computational complexity of model fitting and provide theoretical guarantees. However, when online optimization techniques applied on streaming prediction problems, one often assumes that either the learner $\mathcal{M}$ has no hyperparameters or the hyperparameters are fixed in advance. Despite of the advances in online convex optimization, the rolling prediction scheme outlined in Algorithm \ref{rollingprotocol} is still widely used in practice since almost any learning models can be deployed in this manner. Given the justification that cost of hyperparameter search dominates cost of model learning, speeding up hyperparameter selection will be very useful in practice. 
Much of the existing work in online optimization are designed for convex objective functions, while the relationship between hyperparameters and prediction accuracy
is generally unknown and very unlikely to be convex.  Therefore, a key
challenge to address is the development of an online optimization
strategy for non-convex functions.  The major contribution of this work
is an online hyperparameter learning algorithm (called OHL) for Kernel Ridge Regression. The algorithm can also be applied to certain class of models where the objective functions satisfies some smoothness assumptions. We analyze our algorithm in non-convex regret minimization framework and prove that it achieves the optimal \textit{local regret} \cite{hazan17nonconvex} under suitable assumptions.

We make the following contributions in this paper:
\begin{itemize}
	\item We design a Multiple-Kernel Ridge Regression approach for short-term traffic time time series prediction, which aims to learn the long-term periodicity, short-term deviation and trending of traffic flows simultaneously via the combination of kernels (section \ref{sec:MKLmodel}).
	
	\item To learn the model hyperparameters effectively and efficiently, we propose an online hyperparameter learning (OHL) algorithm. Our strategy is to adaptively update the model hyperparameters with streaming data (section \ref{sec:algorithm})
	
	\item We first provide an abstraction of the OHL algorithm for a class of models where the objective function satisfies some smoothness requirements, and on which the hyper-gradients can be computed. We then analyze the algorithm under regret minimization framework and show the optimality of the algorithm in terms of \textit{local regret} (section \ref{sec:regret}).
	
	\item We tested the multiple-kernel model with the proposed OHL algorithm for traffic flow prediction on I-210 highway, and compared the performance of Multiple-Kernel Ridge Regression under other popular hyperparameter tuning methods. Our method achieves similar and sometimes better prediction accuracy compared to a state-of-art hyperparameter tuning method, while using one-seventh of the computation time (section \ref{sec:exp}). 
\end{itemize}

\section{Common Hyperparameter Tuning Algorithms}
\subsection{Grid Search}
Grid Search is the simplest and most widely used hyperparameter tuning strategy. Given a validation set $V_t$ and training set $S_t$ from the historical data, grid search enumerates a user-provided list of hyperparameter settings. For each configuration, the model is fitted on $S_t$ and evaluated on $V_t$. The configuration yields the best performance on $V_t$ is selected. Suppose there are $c$ possible choices for each hyperparameter, the cost of grid search is $O(c^{d}\cdot\texttt{cost}(\theta^*(\lambda)))$, where $\texttt{cost}(\theta^*(\lambda))$ is the cost of obtaining $\theta^*(\lambda)$. Hence the computational cost of grid search grows exponentially. When grid search is applied periodically in every $n$ steps (Algorithm \ref{rollingprotocol}), the accumulated cost of hyperparameter tuning is 
\begin{equation}
O\Big({\color{blue}\frac{T}{n}} \cdot c^{d} \cdot\texttt{cost}(\theta^*(\lambda))\Big),
\label{eq:gridbigO}
\end{equation} 
where $T$ is the total number of predictions made, $d$ is the dimension of hyperparameters. 

\subsection{Random Search}
Random Search has been shown to be effective in high dimensions despite being intuitively simple~\cite{bergstra2012random}. Given a budget of $R$ random draws per hyperparameter selection period, instead of enumerating a pre-defined list of configurations, random search trials different hyperparameters. Following the analysis in \cite{bergstra2012random}, let the volume of the hyperparameter space be $\vol(\mathcal{H})$, and let volume containing targeted hyperparameters be $\vol(\mathcal{T})$, the probability of finding a target out of $R$ random draws is:
$
	1 - \left(1- \frac{\vol(\mathcal{T})}{\vol(\mathcal{H})}\right)^R.	
$
Suppose the hyperparameters offering good predictions lie in a hyper-rectangle occupying $5\%$ of the search space \cite{bergstra2012random}, i.e., $\frac{\vol(\mathcal{T})}{\vol(\mathcal{H})} = 0.05$, the probability that at least one draw from 50 trials positioned inside the target hyper-rectangle is more than $90\%$. The accumulated computational cost of random search is 
\begin{equation}
O\Big({\color{blue}\frac{T}{n}} \cdot R\cdot\texttt{cost}(\theta^*(\lambda))\Big). 
\label{eq:randombigO}
\end{equation}

\subsection{Gradient-based hyperparameter optimization}
Gradient-based hyperparameter optimization methods for offline problems were studied in \cite{bengio2000gradient,seegerNips2006,foo2008efficient,maclaurin2015gradient,pedregosa2016hyperparameter,luketina16,franceschi17a}. In the offline setting, using a training set $S$ and a hold-out validation set $V$, one may apply gradient-based algorithm with (\ref{eq:thetaGrad}) by alternatively fitting $\theta^*(\lambda)$ on $S$ and computing the update direction of hyperparameters on $V$. When the hyper-gradient is available, \cite{pedregosa2016hyperparameter,foo2008efficient} demonstrate the superior prediction performance of gradient-based tuning. The complexity is $O\Big(I\cdot \big(\texttt{cost}(\nabla_{\lambda} f) + \texttt{cost}\left(\theta^*(\lambda)\right) \big) \Big),$ where $I$ is the number of iterations taken. In general, $I = \Omega\left(\frac{1}{\epsilon}\right)$ for non-strongly convex functions \cite{rockafellar2015convex}, where $\epsilon$ is the convergence threshold. When this approach is deployed online via the rolling protocol (Algorithm \ref{rollingprotocol}), the accumulated cost of hyperparameter tuning becomes 
\begin{equation}
O\Big({\color{blue}\frac{T}{n}} \cdot \frac{1}{\epsilon} \cdot \big(\texttt{cost}(\nabla_{\lambda} f) + \texttt{cost}\left(\theta^*(\lambda)\right) \big) \Big).
\label{eq:gradbasedbigO}
\end{equation}

\subsection{Bayesian optimization methods}
Bayesian optimization \cite{snoek2012practical} is also a popular hyperparameter tuning paradigm. It has been shown that Bayesian optimization can produce state-of-art results for tuning deep learning models. Ironically, Bayesian optimizer itself uses kernels and involves (hyper)-hyperparameters. Therefore, we exclude these approaches in the Experiments section due to the complications in applying the methods.  

Note that there is a common factor of ${\color{blue}\frac{T}{n}}$ in equation (\ref{eq:gridbigO}), (\ref{eq:randombigO}), and (\ref{eq:gradbasedbigO}) due to the periodic nature in rolling hyperparameter tuning scheduled in every $n$ steps. In section \ref{sec:algorithm}, we propose an online hyperparameter optimization algorithm which removes this factor. The theoretical performance guarantees of the algorithm will be analyzed in section \ref{sec:regret}.  

\section{Multiple-Kernel Ridge Regression}
\label{sec:MKLmodel}
Traffic flow time series is dynamic and hard to predict for a number of reasons.
Despite having an approximately AM/PM and weekday/weekend periodic pattern, the short term traffic variation from the mean can be significant. This can be due to traffic
accidents, weather, nearby events, and other factors. In addition, traffic measurements can be very noisy due to inherent uncertainties and measurement error. 

We use Multiple-Kernel Ridge Regression to simultaneously capture the periodicity pattern and short-term distortion of traffic  data. Kernel methods provide expressive tools to model the periodicity and the short-term nonlinear deviation. At each model learning step $\tau$, let $\mathbf{y}\in\mathbb{R}^N$ denote a vector collecting past $N$ data points. For each $y_t$, let $\mathbf{x}_t:= [y_s]_{s=t-p}^{t-1} \in \mathbb{R}^{p}$ be a vector of $p$ past flow observations that are used as predictor variables for $y_t$. The training set $S_{\tau}$ consists of pairs of past-present observations $\{(\mathbf{x}_t,y_t)\}_{t=\tau-N}^{\tau}$. In Kernel Ridge Regression, $\phi_{\lambda_K}: \mathbb{R}^p \rightarrow \mathbb{R}^q$ is a feature mapping from the raw observations to another feature space, indexed by hyperparameters $\lambda_K$. The Kernel Ridge Regression problem \cite{HTFbook,scholkopf2001generalized,scholkopf2002learning} finds a weight vector $\mathbf{w} \in \mathbb{R}^q$ that solves
\begin{equation}
\min_{\mathbf{w}} \sum_{t=\tau-N}^{\tau} \left(y_t - \phi_{\lambda_K}\left(\mathbf{x}_t\right)^T \mathbf{w} \right)^2 + \lambda_R \norm{\mathbf{w} }^2,
\label{eq:RKHS}
\end{equation}
where $\lambda_R > 0$ is a regularization hyperparameter to be selected, which controls the variance of estimation. By the Representer Theorem~\cite{scholkopf2001generalized,scholkopf2002learning}, there is $\theta:=[\theta_j]_{j=1}^N\in \mathbb{R}^N $, such that the optimal solution $\mathbf{w}^*$ can be written as $\mathbf{w}^*=\sum_{j=1}^{N} \theta_j\phi_{\lambda_K}\left(\mathbf{x}_{\tau+1-j}\right)$. Hence, instead of optimizing over $\mathbf{w}$, Eqn. (\ref{eq:RKHS}) can be equivalently solved by
\begin{equation}
\argmin_{\theta}  \Big(\mathbf{y} - K_{\lambda_K} \theta \Big)^T \Big(\mathbf{y} - K_{\lambda_K} \theta \Big) + \lambda_R \theta^TK_{\lambda_K} \theta,
\label{eq:KRR}
\end{equation}
where $K_{\lambda_K}\in \mathbb{R}^{N\times N}$, ${[K_{\lambda_K}]}_{ij} = [\phi(\mathbf{x}_i)^T \phi(\mathbf{x}_j)],i,j=1,\cdots,N.$ 
Therefore, instead of explicitly constructing the feature mapping $\phi_{\lambda_K}(\cdot)$, one may work directly with suitable kernels $K_{\lambda_K}(\cdot,\cdot): \mathbb{R}^p \times \mathbb{R}^p \rightarrow \mathbb{R}$.
Roughly speaking, kernel methods express the similarity between the $N$ training samples with a positive semi-definite kernel matrix
$K_{\lambda_K}\in\mathbb{R}^{N\times N}$, where $\lambda_K$ is a vector of hyperparameters that determine the kernel.

The hyperparameters of the Kernel Ridge Regression model are denoted by $\lambda:=[\lambda_K,\lambda_R]$. Throughout the paper, we use $\theta^*(\lambda)$ to denote the optimal solution of (\ref{eq:KRR}), highlighting its dependence on $\lambda$. The optimal solution $\theta^*(\lambda)$ can be written in closed-form:
\begin{equation}
\theta^*(\lambda) = \big( K_{\lambda_K} + \lambda_R I\big)^{-1}\mathbf{y}.
\label{eq:thetaclosedform}
\end{equation}
Different choices of kernels capture different aspects of the data. We model the periodicity of traffic flows as a function of time, and model the short-term deviation from strict periodicity by considering the memory effect from recent traffic. With slight abuse of notation, we also use $K_{\lambda_K}(\cdot,\cdot): \mathbb{R}^p \times \mathbb{R}^p \rightarrow \mathbb{R}$ to denote a pairwise kernel function on two data points. Let $y_t$, $y_{t'}$ be the traffic volume at time-stamp $t$, $t'$ respectively. The periodic kernel proposed by Mackay \cite{MackayGP} determines the periodicity pattern by the time difference $|t-t'|$ between two observations,
\begin{equation}
K^{\text{prd}}_{\nu,\omega}(t,t') := \exp\big( - \nu \sin^2\big(\frac{\pi |t-t'|}{\omega} \big) \big).
\label{eq:Kprd}
\end{equation}
In $K^{\text{prd}}_{\nu,\omega}$, $\omega>0$ is a hyperparameter controlling the period of recurrence, $\nu>0$ is another hyperparameter deciding the scale\footnote{most literature refer $l=\nu^{-1}$ as the length scale, we use the reciprocal for ease of differentiation later.} of ``wiggles'' in traffic flow. The short-term nonlinear effect is modelled by a squared exponential kernel using autoregressive feature $\mathbf{x}_t$,
\begin{equation}
K^{\text{se}}_{\nu}(\mathbf{x}_{t}, \mathbf{x}_{t'}):= \exp\big( - \nu \norm{\mathbf{x}_{t'} - \mathbf{x}_{t}}^2_2 \big).
\label{eq:Kse}
\end{equation} 
The kernel scale hyperparameter $\nu$ in $K^{\text{se}}_{\nu}$ has similar qualitative effects as the one in $K^{\text{prd}}_{\nu,\omega}$, but their values can be different and remain to be chosen. The automatic relevance determination (ARD) kernel is a generalization of $K^{\text{se}}_{\nu}$ allowing each component of the feature to have a different length scale,
\begin{equation}
K^{\text{ard}}_{\nu}(\mathbf{x}_{t}, \mathbf{x}_{t'}):= \exp\Big( - \sum_{i=1}^{p} \nu_i \big( y_{t-i} - y_{t'-i}  \big) \Big).
\label{eq:Kard}
\end{equation} 
The number of hyperparameters in the ARD kernel increases with the number of features, hence it is usually infeasible to optimize them with grid search. A valid kernel function gives rise to a positive semi-definite kernel matrix, where each entry is computed from the kernel function on two data points. Any linear combination between kernels produce a new one. Using this property, given $M$ different kernels, Multiple-Kernel Ridge Regression uses an composite kernel function:
\begin{equation}
K_{\lambda_K} = \beta_1 K^{(1)}_{\lambda_{K_1}} +  \beta_2 K^{(2)}_{\lambda_{K_2}} + \cdots +  \beta_M K^{(M)}_{\lambda_{K_M}},
\label{eq:ensembleKernel}
\end{equation}  
where $\sum^M_{i=1} \beta_i = 1, \beta_i \geq 0$. We consider the coefficients $\{\beta_m\}^M_{m=1}$ as hyperparameters, since they determine the final kernel matrix used in equation (\ref{eq:KRR}). (\ref{eq:ensembleKernel}) can be viewed as an ensemble learning model from different kernels \cite{wolpert1992stacked,zhou2012ensemble,Zhan18}. The composite kernel is a function of time and the autoregressive feature:
\begin{equation}
K_{\lambda_K}\Big((t,\mathbf{x}_t),(t',\mathbf{x}_{t'})\Big) = \beta_1 K^{\text{prd}}_{\nu,\omega}(t,t') + \beta_2 K^{\text{ard}}_{\nu}(\mathbf{x}_t,\mathbf{x}_{t'})
\end{equation}
After computing $\theta^*(\lambda)$ through equation~(\ref{eq:thetaclosedform}), to make a prediction for time $t$, let the vector of pairwise kernel mappings between $(t,\mathbf{x}_t)$ and $(t',\mathbf{x}_{t'})$ in the training set $S_{\tau}$ be $k_{\lambda_K}:= \big[K_{\lambda_K}\Big((t,\mathbf{x}_t),(t',\mathbf{x}_{t'})\Big)\big]_{(t',\mathbf{x}_{t'}) \in S_{\tau}} \in \mathbb{R}^N$. The prediction is given by
\begin{equation}
\hat{y}_t(\theta^*(\lambda)) =k_{\lambda_K}^T \theta^*(\lambda)
\end{equation} 
To summarize, $\lambda_K$ in Multiple-Kernel Ridge Regression includes the hyperparameters for each kernel and the kernel combination coefficients $\{\beta_m\}^M_{m=1}$. $\lambda_K$ and the regularization constant $\lambda_R$ must be set properly to balance the effects of periodicity in traffic flow, near-term nonlinear distortion due to unusual events, and estimation variance $\textendash$ resulting in a hyperparameter optimization problem.

\section{Hyperparameter Learning}
\subsection{Hyper-Gradient Computation for Kernels}
\label{sec:algorithm}
The dimension of $\lambda$ can range from tens to hundreds when an automatic relevance kernel is used with high dimensional features. Periodic hyperparameter re-selection brings heavy computational burden for an online operations. This motivates the development of online methods to adaptively learn the hyperparameters. We apply the $\ell_2$ loss function to obtain the prediction error for time-step $t$:
\begin{equation}
\ell\left(y_t,\hat{y}_t(\theta^*(\lambda))\right) = \big(y_t - k_{\lambda_K}^T \theta^*(\lambda)\big)^2 := f_t(\lambda).
\end{equation}
Notice that given $y_t$ and the training set, the prediction error is a \textit{non-convex} function of $\lambda$. Even though the loss function is convex, the nested nature of $\lambda$ in $\theta^*(\lambda)$ and the kernel $k_{\lambda_K}$ creates non-convexity. Using the chain rule, the partial derivative of $f_t(\lambda)$ with respect to the kernel hyperparameters $\lambda_K$ is:
\begin{equation}
\begin{aligned}
\frac{\partial f_t(\lambda_K)}{\partial \lambda_K} =& -2 \Big(y_t - \hat{y}_t(\theta^*(\lambda))\Big) \Big( \frac{\partial k_{\lambda_K}}{\partial \lambda_K} \theta^*(\lambda)  \Big)	\\
&  -2 \Big(y_t - \hat{y}_t(\theta^*(\lambda))\Big) \Big( k_{\lambda_K}^T \frac{\partial \theta^*(\lambda)}{\partial \lambda_K}  \Big),
\end{aligned}
\label{eq:partialK}
\end{equation} 
and the partial derivative with respect to the regularization constant $\lambda_R$ is:
\begin{equation}
\frac{\partial f_t(\lambda_K)}{\partial \lambda_R} = -2 \Big(y_t - \hat{y}_t(\theta^*(\lambda))\Big) \Big( k_{\lambda_K}^T \frac{\partial \theta^*(\lambda)}{\partial \lambda_R}  \Big).
\label{eq:partialR}
\end{equation} 
Let $\lambda_R\in[L,U]\subset \mathbb{R}^+$, since $K_{\lambda_K}$ is positive semi-definite, $A(\lambda):=\big( K_{\lambda_K} + \lambda_R I\big)$ is non-singular. Therefore, $A(\lambda)^{-1}$ is differentiable. Let $\lambda(i)$ denote the $i$-th hyperparameter. Then
\begin{equation}
\frac{\partial A^{-1}(\lambda)}{\partial \lambda(i)} = - A^{-1}(\lambda) \frac{\partial A(\lambda)}{\partial \lambda(i)} A^{-1}(\lambda) 
\end{equation} 
Consequently,
\begin{equation}
\frac{\partial \theta^*(\lambda)}{\partial \lambda(i)} = \frac{\partial A^{-1}(\lambda)}{\partial \lambda(i)} \mathbf{y}
= - A^{-1}(\lambda) \frac{\partial A(\lambda)}{\partial \lambda(i)} \theta^*(\lambda)
\label{eq:thetaGrad}
\end{equation}
Equation (\ref{eq:thetaGrad}) along with (\ref{eq:partialK}) and (\ref{eq:partialR}) provide the gradient w.r.t hyperparameters (hyper-gradient) given the loss at $y_t$. In the next subsection, we described a state-of-art gradient-based hyperparameter optimization method \cite{pedregosa2016hyperparameter}, and our rational for improving the method.

\subsection{Our Method: Online Hyperparameter Learning}
We propose an online projected hyper-gradient descent algorithm to address the computational burden of applying gradient-based hyperparameter tuning algorithms.
The idea is to compute the hyperparameter gradients $\nabla_{\lambda} f_t(\lambda)$ on-the-fly when a new datum $y_t$ is observed, then average the historical hyper-gradients to make a smoothed update on $\lambda$ before fitting $\theta^*(\lambda)$ (every $m$ steps). The entire rolling hyperparameter re-selection cycle is removed and replaced by incremental learning procedure. In addition, to speed up the computation of hyperparameter gradients, the terms in equation (\ref{eq:partialK}) and (\ref{eq:partialR}) shared with subsequent hyper-gradients are pre-computed and stored after an update.

The projected gradient update to the hyperparameters in every $m$ steps is:
\begin{equation}
\lambda^{\text{new}} = \Pi_C \Big( \lambda^{\text{old}} - \frac{\eta}{m} \sum^{\tau-1}_{t=\tau-m} \nabla_{\lambda} f_t(\lambda) \Big)
\end{equation}
where $\Pi_C(\cdot)$ is the orthogonal projection operator defined by $\Pi_C(u) = \argmin _{v \in C}\norm{u-v}^2_2$. The hyperparameter space for Multiple-Kernel Ridge Regression is $C=[U,L] \cup \Delta $, such that $\lambda(i) \in [U_i,L_i]$ if $\lambda(i)$ is not $\{\beta_j\}^M_{j=1}$, and  $\{\beta_j\}^M_{j=1}\in\Delta:= \{\boldsymbol{\beta}^T\boldsymbol{1}=1, \boldsymbol{\beta} \geq \boldsymbol{0}\}$ enforces the simplex constraints on the kernel weights. The Online Hyperparameter Learning (OHL) algorithm for Multiple-Kernel Ridge Regression is presented in Algorithm~\ref{alg:onlineHPLRidge}. 

\begin{algorithm}[t]
	\caption{Online Hyperparameter Learning (OHL) and Prediction with Multiple Kernels.}
	\label{alg:onlineHPLRidge}
	\textbf{Input}: Update window $m$, learning rate $\eta$, convex feasible set $C$, initial $\lambda_0 \in C$, number of training samples $N$, total prediction time-steps $T$.\\
	\textbf{Output}: Predictions $\hat{y}_{t}, t = 0,\ldots,T-1$.
	\begin{algorithmic}[1]
		\For{$t = 0:T-1$}
		\If{$((t\mod m) = 0)$}
		\If{$t > 0$}
		\State $\lambda_{t} = \Pi_C \Big( \lambda_{t-1} -  \eta ~ m^{-1} g_{t}  \Big)$  \Comment{{update} $\lambda$}
		\EndIf
		\State $S_t = \text{Historical Data for Model Training}(t)$ 
		\State $\theta^*(\lambda_t ) = \big( K_{\lambda_{t,K}} + \lambda_{t,R} I\big)^{-1}\mathbf{y}$ \Comment{{fit model}}
		\State $J = $ compute Jacobian matrix using Eqn. (\ref{eq:thetaGrad})
		\State $g_{t} = \mathbf{0}$
		\Else
		\State $\lambda_{t} = \lambda_{t-1}$
		\EndIf 
		\State $\hat{y}_{t} =k_{\lambda_{t,K}}(t,\mathbf{x}_t)^T \theta^*(\lambda_t)$ \Comment{{prediction}}
		\State Observe $y_t$
		\State $\nabla_{\lambda} f_t(\lambda_t) = $ compute hyperparameter gradient using Eqn. (\ref{eq:partialK}), Eqn. (\ref{eq:partialR}) and pre-computed $J$
		\State $g_{t+1} = g_{t} + \nabla_{\lambda} f_t(\lambda_t)$  \Comment{hyper-gradient}
		\EndFor
	\end{algorithmic}
\end{algorithm}

\begin{algorithm}[t]
	\caption{Online Projected Gradient Descent with Lazy Updates.}
	\label{alg:OLnonconvex}
	\textbf{Input}: Update window $m$, learning rate $\eta$, convex feasible set $C$, initial $z_0\in C$, timesteps $T$.\\
	\textbf{Output}: Iterates $z_{t}, t = 0,\ldots,T-1.$
	\begin{algorithmic}[1]
		\For{$t = 0:T-1$}
		\If{$\mod(t,m) = 0$}
		\If{$t>0$}
		\State $z_t = \Pi_C \Big( z_{t-1} -  \eta ~ m^{-1} g_{t}  \Big)$
		\EndIf	
		\State $g_{t} = \mathbf{0}$
		\Else
		\State $z_t = z_{t-1}$										
		\EndIf 
		\State \texttt{Submit} $z_t$ 
		\State $\texttt{Observe cost function } f_t: C \rightarrow \mathbb{R} $
		\State \texttt{Compute} $\nabla_{z} f_t(z_t)$
		\State $g_{t+1} = g_{t} + \nabla_{z} f_t(z_t)$
		\EndFor
	\end{algorithmic}
\end{algorithm}

\subsection{Complexity}
Lines 4, 7, 14, and 15 in Algorithm~\ref{alg:onlineHPLRidge} are used for adjusting hyperparameters. Line 6  for computing $\theta^*(\lambda)$  is a common step for all rollingly-trained Kernel Ridge methods, and thus not additionally introduced by Algorithm~\ref{alg:onlineHPLRidge}. The cost of computing the Jacobian matrix for a hyperparameter in Line 7 via eqn (\ref{eq:thetaGrad}) is $O(N^2)$ due to matrix-vector multiplications, since the inverse kernel design matrix $A^{-1}(\lambda)$ has been obtained in computing $\theta^*(\lambda)$. Also, this cost only occurs in every $m$ steps. The cost of computing the partial derivative for each hyperparameter in Line 14 via eqn (\ref{eq:partialK}) and (\ref{eq:partialK}) reduces to a $O(N)$ inner product operation with a column in the pre-computed Jacobian matrix. Thus, Algorithm~\ref{alg:onlineHPLRidge} efficiently computes the hyperparameter gradients online. Projection onto simplex $\Pi_{\Delta}(\cdot)$ for the kernel coefficients $\{\beta\}^M_{i=1}$ can be computed in $O(M\log M)$ time \cite{wang2013projection}, and projection onto box constraints is a linear time operation on the number of hyperparameters. Hence the update step in line 4 can also be done efficiently. Therefore, the complexity of OHL applied on Multiple-Kernel Ridge Regression is 
\begin{equation}
O\Big( {\color{blue}\frac{T}{m} } \cdot \big( N^{\textcolor{blue}{2}} d  + M\log M \big)  + T N d \Big)
\end{equation}

\section{Theoretical Analysis with Local Regret}
\label{sec:regret}
We now present the theoretical analysis under online learning framework for non-convex functions. Algorithm~\ref{alg:OLnonconvex} is an abstraction of Algorithm~\ref{alg:onlineHPLRidge} for general non-convex function $f_t$, where the subscript $t$ represents the the time-varying nature of the hyperparameter optimization problem due to dependence on the rolling training set $S_t$. Note that since Algorithm~\ref{alg:onlineHPLRidge} is a specific implementation of Algorithm~\ref{alg:OLnonconvex}, the result extends to our hyperparameter learning problem. Online learning models the iterates $\{z_t\}^{T-1}_{t=0}$ and the functions $\{f_t\}^{T-1}_{t=0}$ as a repeated game of $T$ rounds. At each time $t$, the learner selects an iterate $z_t \in C$, where $C \subset \mathbb{R}^n$ is a compact convex set. After $z_t$ has been chosen, a cost function $f_t: C \rightarrow \mathbb{R}$ is revealed to the learner and the learner suffers a loss $f_t(z_t)$. We make the following assumptions on the cost function $f_t$. These assumptions are satisfied for kernel method hyperparameter learning problem, i.e., when $f_t(\cdot)=\ell(y_t,\hat{y}_t(\theta^*(\cdot)))$. Let $\norm{\cdot}$ to denote the Euclidean norm throughout the rest of the paper.

\begin{itemize}
	\item[\textbf{A1}.] $\sup_{z\in C} |f_t(z)| \leq M$ for all $t$.
	\item[\textbf{A2}.] $f_t$ is $L$-Lipschitz: $|f_t(z)-f_t(v)| \leq L \norm{z-v}$.
	\item[\textbf{A3}.] $f_t$ has $Q$-Lipschitz gradient:  $$\norm{\nabla f_t(z)-\nabla  f_t(v)}_2 \leq Q \norm{z-v}.$$
\end{itemize}
The performance of $\{z_t\}^{T-1}_{t=0}$ with respect to $\{f_t\}^{T-1}_{t=0}$ is studied by the measure of regret.  For \textit{convex} cost functions, the regret is typically defined by 
$$
{\sum_{t=0}^{T-1} f_t(z_t) - \min_{z\in C} \sum_{t=0}^{T-1} f_t(z)}
,$$
which is the difference between the choices $\{z\}^{T-1}_{t=0}$ and the best fixed decision in hindsight \cite{zinkevich2003online,hazan2012tutorial}. However, when the cost functions are non-convex, searching for global minimum is NP-hard in general even in the offline case where a static $f: C\rightarrow \mathbb{R}$ is known in advance. Furthermore, due to the convex constraint $z \in C$, a large number of gradient evaluations are required to discover a stationary point \cite{hazan17nonconvex}. Thus, for offline problems, a relaxed criterion is to minimize the $(C,\eta)$-\textit{projected gradient} \cite{ghadimi2016mini,hazan17nonconvex}:
\begin{equation}
{
	P(z,\nabla f(z),\eta) := \frac{1}{\eta}\Big( z - \Pi_C \big( z - \eta \nabla f(z) \big) \Big) .
}
\end{equation}
It is easy to see that $P(z,\nabla f(z),\eta)$ mimics the role of gradient in a projected gradient update:
\begin{equation}
{
	z_{t+1}:=\Pi_C \big( z_t - \eta \nabla f(z_t) \big) = z_t - \eta P(z_t,\nabla f(z_t),\eta).
}
\end{equation}
Since $C\subset \mathbb{R}^n$ is compact and convex, and assuming $f$ satisfies \textbf{A1-3}, then there exists a point $z^*\in C$ such that $P(z^*,\nabla f(z^*),\eta)=0$ \cite{hazan17nonconvex}. Therefore, as a natural extension from the offline criterion of vanishing projected gradients, the \textit{local regret} for non-convex online learning is defined as follows.
\begin{Definition}
	The local regret \cite{hazan17nonconvex} for loss functions $\{f_t\}^{T-1}_{t=0}$ and sequence of iterates $\{z_t\}^{T-1}_{t=0}$ is
	\begin{equation}
	\label{eq:defregret}
	{
		\mathcal{R}_T=\sum^{T-1}_{t=0}  \norm{P(z_t,\nabla f_t(z_t),\eta)}^2.
	}
	\end{equation}
	\label{def:localregret}
\end{Definition}
\vspace{-16pt}
Definition \ref{def:localregret} was first used by Hazan et al.\cite{hazan17nonconvex}. The following theorem shows the optimal local regret is lower bounded by $\Omega(T)$.

\begin{theorem} 
	Define $C =[-1,1]$. For any $T\geq 1$ and $\eta \leq 1$, there exists a distribution $\mathcal{D}$ of loss functions $\{f_t\}^{T-1}_{t=0}$ satisfying assumption \textbf{A1-3}, such that for any online algorthms, the local regret satisfies
	\begin{equation}
	\mathbb{E}_{\mathcal{D}}\big( \mathcal{R}_T \big) \geq \Omega(T).
	\end{equation}
	\label{thm:RegLowerBound}
\end{theorem}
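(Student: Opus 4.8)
The plan is to exhibit an explicit randomized family of loss functions for which every online learner is forced to pay a constant projected-gradient penalty in expectation at each round, so that summing over the $T$ rounds yields the claimed $\Omega(T)$ bound. Following the lower-bound idea of Hazan et al., I would take the simplest possible functions: let $\sigma_0,\dots,\sigma_{T-1}$ be i.i.d.\ Rademacher signs (each $\pm 1$ with probability $1/2$) and set $f_t(z)=\sigma_t z$ on $C=[-1,1]$. These linear functions trivially satisfy \textbf{A1} ($|f_t(z)|\le 1$), \textbf{A2} ($1$-Lipschitz), and \textbf{A3} (constant gradient, hence $0$-Lipschitz gradient), so they constitute an admissible distribution $\mathcal{D}$. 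The essential structural feature I would exploit is the online protocol itself: the learner commits to $z_t$ using only $f_0,\dots,f_{t-1}$, hence $z_t$ is independent of $\sigma_t$.

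Next I would reduce the regret to a per-round expectation. Since $\nabla f_t(z_t)=\sigma_t$, the projected gradient is $P(z_t,\nabla f_t(z_t),\eta)=\eta^{-1}\big(z_t-\Pi_C(z_t-\eta\sigma_t)\big)$, and by linearity of expectation $\mathbb{E}_{\mathcal{D}}(\mathcal{R}_T)=\sum_{t=0}^{T-1}\mathbb{E}\,\norm{P(z_t,\nabla f_t(z_t),\eta)}^2$. Conditioning on the history (equivalently on $z_t$), the sign $\sigma_t$ remains uniform, so it suffices to prove a uniform lower bound $\mathbb{E}_{\sigma_t}\big[\norm{P}^2 \mid z_t\big]\ge 1/2$ valid for \emph{every} $z_t\in[-1,1]$.

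The remaining work is a short case analysis on the location of $z_t$, which is where the real content lies. When $z_t$ sits in the interior $[-1+\eta,\,1-\eta]$, no projection is active for either sign, so $P=\sigma_t$ and $\norm{P}^2=1$ deterministically. When $z_t$ lies within $\eta$ of a boundary, the projection annihilates the update for the sign that pushes $z_t$ outward, but for the opposite sign the point stays feasible and contributes $\norm{P}^2=1$; averaging over the two equiprobable signs still gives at least $1/2$. I expect this boundary case to be the main obstacle, since it is exactly where an adaptive algorithm could hope to drive the projected gradient to zero — the point is that randomizing the sign and forcing the learner to move first prevents it from hedging against both directions at once. Combining the cases yields $\mathbb{E}_{\sigma_t}[\norm{P}^2\mid z_t]\ge 1/2$ for all $z_t$, and summing the per-round bound over $t=0,\dots,T-1$ gives $\mathbb{E}_{\mathcal{D}}(\mathcal{R}_T)\ge T/2=\Omega(T)$, as required.
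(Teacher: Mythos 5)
Your proof is correct, and it is worth noting that the paper itself contains no argument for this theorem: its ``proof'' is a one-line citation to Theorem 2.7 of Hazan et al.\ \cite{hazan17nonconvex}. The construction you give --- i.i.d.\ Rademacher signs $\sigma_t$, losses $f_t(z)=\sigma_t z$ on $C=[-1,1]$, and the observation that the online protocol forces $z_t$ to be independent of $\sigma_t$ --- is precisely the adversarial idea behind that cited result, specialized to the unsmoothed (window $w=1$) local regret of Definition \ref{def:localregret}; Hazan et al.\ prove $\Omega(T/w^2)$ for the $w$-smoothed regret, which yields $\Omega(T)$ only after setting $w=1$ and matching their definition to the one used here. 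So your write-up supplies something the paper leaves implicit: a self-contained verification, with an explicit constant ($\mathbb{E}_{\mathcal{D}}(\mathcal{R}_T)\geq T/2$), that the lower bound holds for the exact regret notion of this paper. Your case analysis is sound: the linear losses satisfy \textbf{A1}--\textbf{A3} (with $M=1$, $L=1$, $Q=0$); on the interior $[-1+\eta,1-\eta]$ both signs give $\norm{P}^2=1$; and within $\eta$ of an endpoint the inward sign keeps the gradient step feasible (this is where $\eta\leq 1$ is used) and contributes $\norm{P}^2=1$, so the conditional expectation given $z_t$ is at least $1/2$ in every case. One phrase to tighten: near the boundary the projection does not ``annihilate'' the outward update unless $z_t$ sits exactly at the endpoint --- it merely clips it to magnitude $(1-|z_t|)/\eta<1$ --- but since your argument only lower-bounds that term by zero, the conclusion $\mathbb{E}_{\mathcal{D}}(\mathcal{R}_T)\geq T/2=\Omega(T)$ stands.
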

\vspace{-20pt}
\begin{proof}
	See Theorem 2.7 in Hazan et al. \cite{hazan17nonconvex}.
\end{proof}
A time-smoothed follow-the-leader (FTL) algorithm was proposed in \cite{hazan17nonconvex}, achieving the optimal local regret bound $O(T)$ for non-convex functions. This algorithm computes the gradients $\{\nabla f_{t-i}(z_t)\}_{i=1}^m$ and updates the iterate $z_t$ in every step. For the hyperparameter learning problem considered in this paper, when $z$ represents hyperparameter $\lambda$, a change from $\lambda_t$ to $\lambda_{t+1}$ will require model refitting to update $\theta^*(\cdot)$ from $\theta^*(\lambda_t)$ to $\theta^*(\lambda_{t+1})$ in every step. Besides, the historical gradients are not reused in the time-smoothed FTL algorithm \cite{hazan17nonconvex}, since $\nabla f_{t-i}(\cdot)$ is re-evaluated at latest $z_t$ in every step. Hence, the method in \cite{hazan17nonconvex} becomes impractical for online hyperparameter learning given a computational budget. In contrast, Algorithm~\ref{alg:OLnonconvex} accumulates the gradients and produces an update every $m$-steps, which dramatically reduces the amount of gradient computation and model-refitting on $\theta^*(\lambda)$. As a price paid for the speed-up, we need the following additional assumption characterizing the variation of cost functions to achieve optimal regret bounds.  Let $[r]$ denote $[0,r] \cap \mathbb{Z}$.
\begin{itemize}
	\item[\textbf{A4}.] Assume there is a constant $w \in \mathbb{Z}$ \textbf{independent of} $T$, for all $m \in [w] \backslash \{0\}$, there is a constant $V_m\in\mathbb{R}^+$, such that for any $t$, the variation of gradients from the average within $m$ steps is bounded: 
	\vspace{-10pt}
\end{itemize}
\begin{equation}
\label{eq:QuardaticVarBound}
\begin{aligned}
\sup_{z\in C } \sum_{i=0}^{m-1} &\norm{ \nabla f_{t+i}(z) - \nabla F_{t,m}(z)}^2 \leq V_m,\\
\text{where  } ~~	&F_{t,m}(z) := \frac{1}{m} \sum_{i=0}^m f_{t+i}(z).
\end{aligned} 
\end{equation}

In the convex setting, variations defined similar to (\ref{eq:QuardaticVarBound}) have also been studied \cite{hazan2008extracting,hazan2009stochastic,chiang2012online}. However, the variation used in \cite{hazan2008extracting} defines $m=T$, whereas in (\ref{eq:QuardaticVarBound}) $m$ is a constant independent of $T$. Note that the quadratic variation in (\ref{eq:QuardaticVarBound}) also implies 
\begin{equation}
{
	\sup_{z\in C } \sum_{i=0}^{m-1} \norm{ \nabla f_{t+i}(z) - \nabla F_{t,m}(z)} \leq \sqrt{mV_m} .
}
\label{eq:AbsVarBound}
\end{equation}

\begin{corollary}
	There exists a distribution $\mathcal{D}$ of loss functions satisfying assumption \textbf{A1-3}, and \textbf{A4}, such that for any iterates $\{z_t\}^T_{t=1}$, the lower bound $\mathbb{E}_{\mathcal{D}}\big( \mathcal{R}_T \big) \geq \Omega(T)$ still applies. 
\end{corollary}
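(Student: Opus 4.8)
The plan is to show that the hard instance already constructed for Theorem~\ref{thm:RegLowerBound} automatically satisfies the extra assumption \textbf{A4}, so that adjoining \textbf{A4} to the hypotheses does not weaken the $\Omega(T)$ lower bound. First I would invoke Theorem~\ref{thm:RegLowerBound} to obtain a distribution $\mathcal{D}$ over loss sequences $\{f_t\}_{t=0}^{T-1}$ on $C=[-1,1]$ that satisfies \textbf{A1}--\textbf{A3} and forces $\mathbb{E}_{\mathcal{D}}(\mathcal{R}_T)\geq \Omega(T)$ against every online algorithm. The only thing left to verify is that this \emph{same} $\mathcal{D}$, with no modification, also obeys \textbf{A4}.

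The key observation is that \textbf{A4} only constrains a window of fixed width $m\le w$ with $w$ independent of $T$, so it is already implied by the uniform gradient bound baked into \textbf{A2}--\textbf{A3}. Concretely, since each $f_t$ is $L$-Lipschitz and (by \textbf{A3}) continuously differentiable, its gradient is defined everywhere with $\norm{\nabla f_t(z)}\le L$ for all $z\in C$. By linearity the windowed average $F_{t,m}$ has gradient $\nabla F_{t,m}(z)=\frac{1}{m}\sum_i \nabla f_{t+i}(z)$, so $\norm{\nabla F_{t,m}(z)}\le L$ as well. Applying the triangle inequality termwise gives $\norm{\nabla f_{t+i}(z)-\nabla F_{t,m}(z)}\le 2L$, and summing the squares over the $m$ indices yields
\begin{equation}
\sup_{z\in C}\sum_{i=0}^{m-1}\norm{\nabla f_{t+i}(z)-\nabla F_{t,m}(z)}^2 \le 4L^2 m =: V_m,
\end{equation}
a bound that holds for every $t$ and is independent of $T$. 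Choosing any fixed $w$ and setting $V_m=4L^2 m$ verifies \textbf{A4} deterministically, hence $\mathcal{D}$-almost surely.

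Combining the two pieces, the distribution $\mathcal{D}$ from Theorem~\ref{thm:RegLowerBound} satisfies \textbf{A1}--\textbf{A4} simultaneously while still certifying $\mathbb{E}_{\mathcal{D}}(\mathcal{R}_T)\geq \Omega(T)$ for arbitrary iterates $\{z_t\}$. I do not anticipate a genuine obstacle here: the entire content is that \textbf{A4} restricts only a constant-length window and therefore cannot exclude the adversarial sign-randomization underlying the original construction. The one point needing a little care is justifying the pointwise bound $\norm{\nabla f_t}\le L$, which follows once the gradients are genuinely defined — supplied by the $C^1$ smoothness in \textbf{A3} rather than by \textbf{A2} alone on a possible measure-zero set. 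This is precisely why the corollary reads as a strengthening of the theorem: \textbf{A4} is exactly the mild regularity needed to obtain the matching $O(T)$ upper bound for Algorithm~\ref{alg:OLnonconvex}, yet it is weak enough to leave the $\Omega(T)$ barrier intact, thereby establishing optimality.
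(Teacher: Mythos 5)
Your proposal is correct, and it follows the same skeleton as the paper's proof: both reuse the hard distribution $\mathcal{D}$ constructed for Theorem~\ref{thm:RegLowerBound} (i.e., Theorem~2.7 of Hazan et al.) and observe that it also satisfies \textbf{A4}, so the $\Omega(T)$ barrier is inherited verbatim. Where you genuinely diverge is in how \textbf{A4} gets verified. The paper's proof is a bare assertion that the specific adversarial construction happens to satisfy \textbf{A4}, which obliges the reader to inspect that construction; you instead prove the stronger, generic statement that \emph{any} loss sequence satisfying \textbf{A2}--\textbf{A3} automatically satisfies \textbf{A4}: differentiability (from \textbf{A3}) plus $L$-Lipschitzness gives $\norm{\nabla f_t(z)}\leq L$ on $C$, hence $\norm{\nabla f_{t+i}(z)-\nabla F_{t,m}(z)}\leq 2L$ termwise, and summing squares over the window yields $V_m = 4L^2 m$, a constant independent of $T$ and $t$. (With the paper's literal definition $F_{t,m}=\frac{1}{m}\sum_{i=0}^{m}f_{t+i}$, which appears to be a typo summing $m+1$ terms, the constant becomes $9L^2m$; nothing changes.) Your route buys two things: robustness, since no detail of the adversarial construction is ever needed, and a structural insight the paper leaves implicit --- \textbf{A4}, read as mere existence of \emph{some} finite $V_m$ over a constant-width window, is actually redundant given \textbf{A1}--\textbf{A3}, so its real force in Theorem~\ref{thm:RegretUpperBound} is quantitative (the regret constant scales with $V_m$ and $\sqrt{V_m}$), not qualitative. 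You also correctly flag the one delicate point, namely that the pointwise gradient bound needs gradients to exist everywhere on $C$, which \textbf{A3} supplies.
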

\begin{proof}
	See Theorem 2.7 in Hazan et al. \cite{hazan17nonconvex}, construction of $\mathcal{D}$ in Theorem \ref{thm:RegLowerBound} also satisfies assumption \textbf{A4}. 
\end{proof}

We need a few properties of projected gradients before establishing the regret bound for algorithm \ref{alg:OLnonconvex}.
\begin{lemma}
	For any $z \in C$, $\nabla f(z)$, and $\nabla g(z)$,
	\begin{equation}
	{
		\norm{P(z,\nabla f(z),\eta) - P(z,\nabla g(z), \eta)}_2 \leq \norm{\nabla f(z)-\nabla g(z)}_2  
	}
	\end{equation}
	\label{lemma:ProjectionDistance}
\end{lemma}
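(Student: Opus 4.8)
The plan is to reduce the claim to the standard nonexpansiveness of the Euclidean projection onto a convex set, after which the step size $\eta$ cancels out exactly. First I would substitute the definition of the projected gradient, $P(z,\nabla f(z),\eta) = \frac{1}{\eta}\big(z - \Pi_C(z - \eta\nabla f(z))\big)$, into the left-hand side. The common $z/\eta$ term cancels between the two projected gradients, leaving
\[
P(z,\nabla f(z),\eta) - P(z,\nabla g(z),\eta) = \frac{1}{\eta}\Big( \Pi_C\big(z - \eta \nabla g(z)\big) - \Pi_C\big(z - \eta \nabla f(z)\big) \Big).
\]
Taking norms, the problem reduces to bounding $\norm{\Pi_C(u) - \Pi_C(v)}$ with $u = z - \eta\nabla g(z)$ and $v = z - \eta\nabla f(z)$.

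The key step is the fact that $\Pi_C$ is $1$-Lipschitz (nonexpansive): $\norm{\Pi_C(u) - \Pi_C(v)} \le \norm{u-v}$ for all $u,v \in \mathbb{R}^n$. Since $C$ is compact and convex, $\Pi_C(u)$ is the unique minimizer of $\norm{u-\cdot}^2$ over $C$, so the first-order optimality condition yields $\inner{u - \Pi_C(u),\, w - \Pi_C(u)} \le 0$ for every $w \in C$. I would apply this inequality at $u$ with $w = \Pi_C(v)$, and at $v$ with $w = \Pi_C(u)$, then add the two. This gives $\norm{\Pi_C(u)-\Pi_C(v)}^2 \le \inner{u - v,\, \Pi_C(u) - \Pi_C(v)}$, and Cauchy–Schwarz then delivers the nonexpansive bound $\norm{\Pi_C(u)-\Pi_C(v)} \le \norm{u-v}$.

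Finally, I would substitute $u - v = \eta\big(\nabla f(z) - \nabla g(z)\big)$ into this bound and multiply through by $1/\eta$; the step size cancels exactly, giving $\norm{P(z,\nabla f(z),\eta) - P(z,\nabla g(z),\eta)} \le \norm{\nabla f(z) - \nabla g(z)}$, which is the claim. I do not expect any substantive obstacle here, since the nonexpansiveness of the projection is a classical fact; the only point requiring care is confirming that the factor $1/\eta$ cancels the $\eta$ appearing inside the projection argument, so that the resulting bound is uniform in $\eta$.
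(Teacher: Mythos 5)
Your proof is correct, and it is essentially the paper's own approach: the paper gives no argument of its own but simply cites Lemma~2 of Ghadimi et al.\ \cite{ghadimi2016mini}, and your derivation is precisely the standard proof of that cited result, specialized from the general prox-mapping to the pure Euclidean projection used here. In particular, cancelling the common $z/\eta$ term, deriving firm nonexpansiveness of $\Pi_C$ from the two first-order optimality (variational) inequalities, and applying Cauchy--Schwarz so that the $\eta$ inside the projection cancels the $1/\eta$ prefactor reproduces exactly the mechanics of Ghadimi et al.'s proof, with the added benefit that your write-up is self-contained.
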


\begin{proof}
	An application of Lemma 2 in Ghadimi et al. \cite{ghadimi2016mini}. 	
\end{proof}

\begin{lemma}
	For any $z \in C$ and $\nabla f(z)$,
	\begin{equation}
	{
		\langle \nabla f(z) , P(z,\nabla f(z) ,\eta)  \rangle \geq \norm{P(z,\nabla f(z) ,\eta)}^2_2.
	}
	\end{equation}
	\label{lemma:ProjectionInnerProdLowerBound}
\end{lemma}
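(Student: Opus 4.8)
The plan is to exploit the variational (obtuse-angle) inequality that characterizes the Euclidean projection onto a convex set. First I would fix notation to keep the algebra transparent: write $g := \nabla f(z)$, $P := P(z,\nabla f(z),\eta)$, and $z^{+} := \Pi_C\big(z - \eta g\big)$. By the definition of the projected gradient this means $P = \eta^{-1}\big(z - z^{+}\big)$, and equivalently $z^{+} = z - \eta P$. The goal $\langle g, P\rangle \geq \norm{P}^2$ is then something I can hope to extract purely from the optimality of $z^{+}$ as a projection.

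The key step is the standard characterization of the projection onto a compact convex set: for any point $u$ and any $v \in C$, the projection satisfies $\langle u - \Pi_C(u),\, v - \Pi_C(u)\rangle \leq 0$. I would apply this with $u = z - \eta g$ and, crucially, with the test point $v = z$, which is admissible precisely because the lemma hypothesizes $z \in C$. This yields
\begin{equation}
\big\langle (z - \eta g) - z^{+},\; z - z^{+} \big\rangle \leq 0.
\end{equation}
Substituting $z^{+} = z - \eta P$ gives $z - z^{+} = \eta P$ and $(z - \eta g) - z^{+} = \eta(P - g)$, so the inequality becomes $\eta^2 \langle P - g, P\rangle \leq 0$. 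Dividing through by $\eta^2 > 0$ and rearranging produces $\norm{P}^2 \leq \langle g, P\rangle$, which is exactly the claimed bound.

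The proof is essentially a one-line consequence of the projection inequality once the substitution $z^{+} = z - \eta P$ is made, so there is no serious computational obstacle. The only point requiring care, and hence the main thing I would emphasize, is the legitimacy of testing the variational inequality at $v = z$: this uses both the convexity of $C$ (so that the obtuse-angle characterization of $\Pi_C$ holds and the projection is well defined) and the hypothesis $z \in C$ (so that $z$ is a feasible test point). No appeal to the smoothness assumptions \textbf{A1--A3} or to Lipschitz constants is needed; the statement is a purely geometric property of projected gradients.
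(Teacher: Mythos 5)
Your proof is correct: the variational inequality $\langle u - \Pi_C(u), v - \Pi_C(u)\rangle \leq 0$ applied at $u = z - \eta\nabla f(z)$ and test point $v = z \in C$, followed by the substitution $z^{+} = z - \eta P$, gives exactly $\norm{P}_2^2 \leq \langle \nabla f(z), P\rangle$ with no gaps. The paper itself offers no argument—it merely cites Lemma~1 of Ghadimi et al.\ and Lemma~3.2 of Hazan et al.—and your derivation is precisely the standard obtuse-angle argument underlying those cited results, so you have supplied a self-contained version of the same approach the paper outsources.
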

\begin{proof}
	See Lemma 1 in \cite{ghadimi2016mini} and Lemma 3.2 in \cite{hazan17nonconvex}.
\end{proof}

We now bound the local regret $\mathcal{R}_T$ of the whole sequence by the projected gradients of its subsequence. Recall that Algorithm~\ref{alg:OLnonconvex} updates the iterate $z_t$ every $m$ steps. Without loss of generality, assume $s=T/m \in \mathbb{Z}$. Let $\tau_j, j=1,\cdots,s$ denote the steps at which an increment will occur, i.e., $0,\cdots,T-1$ can be represented as 
$$\tau_0,\tau_0 + 1,\cdots,\tau_{0}+m-1,\tau_1,\cdots,\tau_{s},\tau_{s}+1,\cdots,\tau_{s}+m-1.$$
Moreover, $z_{\tau_j} = z_{\tau_j+i}$ for any $j\in[s]$ and $i\in[m-1]$.

\begin{proposition}
	\label{thm:RegretTransform}
	Let $\{\tau_j\}^s_{j=0}$ denote the steps at which an increment to the iterates will occur in Algorithm~\ref{alg:OLnonconvex}. Suppose $m$ in Algorithm~\ref{alg:OLnonconvex} is chosen such that $m\leq w$ in assumption \textbf{A4}, the local regret satisfies
	\begin{equation}
	\label{eq:RegretTransformBound}
	\begin{aligned}
	\mathcal{R}_T \leq& \sum_{j=0}^s \norm{P(z_{\tau_j},\nabla F_{\tau_j,m}(z_{\tau_j}),\eta)}^2 +    \\
	+&2\sqrt{mV_m}\sum_{j=0}^s \norm{P(z_{\tau_j},\nabla F_{\tau_j,m}(z_{\tau_j}),\eta)}  + (s+1)V_m . \\
	\end{aligned}
	\end{equation} 
\end{proposition}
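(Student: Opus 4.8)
The plan is to exploit the block structure of Algorithm~\ref{alg:OLnonconvex}: because the iterate is frozen within each window of $m$ steps, the local regret splits into a sum over the blocks, and inside each block I can replace the instantaneous projected gradient by the projected gradient of the block-averaged loss $F_{\tau_j,m}$, paying an error that is controlled by assumption~\textbf{A4}. Concretely, since $z_{\tau_j}=z_{\tau_j+i}$ for every $i\in[m-1]$, I would first rewrite
$$\mathcal{R}_T=\sum_{j=0}^{s}\sum_{i=0}^{m-1}\norm{P(z_{\tau_j},\nabla f_{\tau_j+i}(z_{\tau_j}),\eta)}^2,$$
so that the whole problem reduces to bounding the inner sum block by block.

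For a fixed block $j$ and offset $i$, I would compare the per-step projected gradient against the one built from the averaged gradient $\nabla F_{\tau_j,m}(z_{\tau_j})$. By the triangle inequality,
$$\norm{P(z_{\tau_j},\nabla f_{\tau_j+i}(z_{\tau_j}),\eta)}\leq \norm{P(z_{\tau_j},\nabla F_{\tau_j,m}(z_{\tau_j}),\eta)}+\norm{P(z_{\tau_j},\nabla f_{\tau_j+i}(z_{\tau_j}),\eta)-P(z_{\tau_j},\nabla F_{\tau_j,m}(z_{\tau_j}),\eta)},$$
and Lemma~\ref{lemma:ProjectionDistance} bounds the last term by the gradient deviation $\delta_{j,i}:=\norm{\nabla f_{\tau_j+i}(z_{\tau_j})-\nabla F_{\tau_j,m}(z_{\tau_j})}$. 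This is the crux of the argument: it converts the difficult-to-control difference of \emph{projected} gradients into the plain gradient variation that assumption~\textbf{A4} is precisely designed to bound.

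Squaring and expanding the cross term gives, writing $P_j:=P(z_{\tau_j},\nabla F_{\tau_j,m}(z_{\tau_j}),\eta)$, the per-step estimate $\norm{P(z_{\tau_j},\nabla f_{\tau_j+i}(z_{\tau_j}),\eta)}^2\le \norm{P_j}^2+2\norm{P_j}\,\delta_{j,i}+\delta_{j,i}^2$. Summing over $i=0,\dots,m-1$ and using $m\le w$ so that \textbf{A4} is in force, the quadratic term $\sum_i\delta_{j,i}^2$ is at most $V_m$, while the linear term $\sum_i\delta_{j,i}$ is at most $\sqrt{mV_m}$ via the Cauchy--Schwarz consequence~\eqref{eq:AbsVarBound}. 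Summing the resulting per-block estimates over $j$ then assembles the three pieces of~\eqref{eq:RegretTransformBound}: the squared projected gradients of the averaged functions, the $2\sqrt{mV_m}$-weighted sum of their norms, and the additive $(s+1)V_m$.

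The step I expect to be the main obstacle is the constant bookkeeping rather than the analysis. Summing $\norm{P_j}^2$ over the $m$ offsets in a block naively returns $m\norm{P_j}^2$, so recovering the factor-free leading term in~\eqref{eq:RegretTransformBound} requires either reading that $m$ as absorbed into the statement or a sharper accounting; I would verify this index arithmetic carefully, together with the off-by-one in the block count (whether $s$ or $s+1$ windows cover $\{0,\dots,T-1\}$ when $T=sm$). The genuine analytic content, by contrast, is carried entirely by Lemma~\ref{lemma:ProjectionDistance} and the variation bound~\eqref{eq:AbsVarBound}.
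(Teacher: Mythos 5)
Your proof is essentially the paper's own proof: the same block decomposition exploiting $z_{\tau_j+i}=z_{\tau_j}$, the same triangle-inequality comparison against $P(z_{\tau_j},\nabla F_{\tau_j,m}(z_{\tau_j}),\eta)$, the same reduction of the difference of projected gradients to the plain gradient deviation via Lemma~\ref{lemma:ProjectionDistance}, and the same use of \textbf{A4} together with~(\ref{eq:AbsVarBound}) for the quadratic and cross terms.

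The bookkeeping concern you raise at the end is well-founded, and it is an inconsistency in the paper rather than a flaw in your argument. Summing the constant term over the $m$ offsets in a block does produce $m\norm{P_j}^2$, so what this proof technique (yours and the paper's alike) actually establishes is
\begin{equation*}
\mathcal{R}_T \leq m\sum_{j=0}^s \norm{P(z_{\tau_j},\nabla F_{\tau_j,m}(z_{\tau_j}),\eta)}^2 + 2\sqrt{mV_m}\sum_{j=0}^s \norm{P(z_{\tau_j},\nabla F_{\tau_j,m}(z_{\tau_j}),\eta)} + (s+1)V_m,
\end{equation*}
i.e., the printed bound~(\ref{eq:RegretTransformBound}) is missing a factor of $m$ on its leading term; there is no ``sharper accounting'' that removes it. The discrepancy is harmless downstream: since $m\leq w$ is a constant independent of $T$, the first term in the proof of Theorem~\ref{thm:RegretUpperBound} becomes $T\cdot LD(\eta-\frac{Q\eta^2}{2})^{-1}$ instead of $(T/m)\cdot LD(\eta-\frac{Q\eta^2}{2})^{-1}$, and the $O(T)$ local regret bound survives with a larger constant. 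Your off-by-one question is likewise a genuine sloppiness in the paper: it declares $s=T/m$ but then lists $s+1$ blocks $\tau_0,\dots,\tau_s$ of length $m$ covering $\{0,\dots,T-1\}$; the consistent reading is that there are $T/m$ blocks indexed $j=0,\dots,s$ with $s=T/m-1$, which is what makes $(s+1)V_m = TV_m/m$ match the final theorem's proof.
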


\begin{proof}
	Recall $F_{t,m}(z)= m^{-1} \sum^{m-1}_{i=0} f_{t+i}(z)$ from equation (\ref{eq:QuardaticVarBound}),
	\begin{equation}
	{
		\begin{aligned}
		\mathcal{R}_T &=\sum^s_{j=0} \sum^{m-1}_{i=0} \norm{P(z_{\tau_j + i},\nabla f_{\tau_j + i}(z_{\tau_j + i}),\eta)}^2 \\
		&=\sum^s_{j=0} \sum^{m-1}_{i=0}  \Big( \big\lVert P(z_{\tau_j + i},\nabla f_{\tau_j + i}(z_{\tau_j + i}),\eta)- P(z_{\tau_j},\nabla F_{\tau_j,m}(z_{\tau_j}),\eta) + P(z_{\tau_j},\nabla F_{\tau_j,m}(z_{\tau_j}),\eta)		 \big\rVert   \Big)^2 \\
		&\leq \sum^s_{j=0} \sum^{m-1}_{i=0} \Big( \big\lVert P(z_{\tau_j},\nabla f_{\tau_j + i}(z_{\tau_j}),\eta) - P(z_{\tau_j},\nabla F_{\tau_j,m}(z_{\tau_j}),\eta) \big\rVert   
		+ \norm{P(z_{\tau_j},\nabla F_{\tau_j,m}(z_{\tau_j}),\eta)	}   \Big)^2 \\
		\end{aligned}	
	}
	\end{equation}
	where the last line follows from $z_{\tau_j+i} = z_{\tau_j}$ for $i\in[m-1]$ and triangle inequality. From Lemma \ref{lemma:ProjectionDistance},
	\begin{equation}
	{
		\begin{aligned}
		\mathcal{R}_T &\leq \sum^s_{j=0} \sum^{m-1}_{i=0} \Big(  \norm{\nabla f_{\tau_j + i}(z_{\tau_j}) - \nabla F_{\tau_j,m}(z_{\tau_j}) } +  \norm{P(z_{\tau_j},\nabla F_{\tau_j,m}(z_{\tau_j}),\eta)}  \Big)^2 \\
		&\leq \sum^s_{j=0} \sum^{m-1}_{i=0} \Big(\norm{\nabla f_{\tau_j + i}(z_{\tau_j}) - \nabla F_{\tau_j,m}(z_{\tau_j})}^2 +\norm{P(z_{\tau_j},\nabla F_{\tau_j,m}(z_{\tau_j}),\eta)}^2 \\
		&~~~~~~~~~~~~~~~~ +2 \big\lVert \nabla f_{\tau_j + i}(z_{\tau_j}) - \nabla F_{\tau_j,m}(z_{\tau_j}) \big\rVert \norm{P(z_{\tau_j},\nabla F_{\tau_j,m}(z_{\tau_j}),\eta)} \Big)
		\end{aligned}
	}
	\end{equation}
	Applying assumption \textbf{A4} and its implication (\ref{eq:AbsVarBound}) yields the claim. $\qedsymbol$
\end{proof}

Proposition \ref{thm:RegretTransform} has a very intuitive meaning: when the variation of gradients of the loss functions are bounded, the local regret is bounded by the projected gradients of loss at the updating steps $\tau_j, j\in [s]$. We now state the main theorem on the asymptotic growth of local regret in Algorithm~\ref{alg:OLnonconvex}.

\begin{theorem}
	\label{thm:RegretUpperBound}
	Let $w$ be the constant in assumption \textbf{A4}, choosing the update period $m \leq w$, learning rate $\eta \in (0,\frac{2}{Q})$ in Algorithm~\ref{alg:OLnonconvex}, the local regret satisfies
	\begin{equation}
	\mathcal{R}_T  \leq O(T).
	\end{equation}
\end{theorem}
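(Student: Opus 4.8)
The plan is to show that the lazily-updated subsequence $\{z_{\tau_j}\}_{j=0}^s$ is precisely the trajectory of ordinary projected gradient descent run on the \emph{averaged} objectives $F_{\tau_j,m}$, run a standard descent argument on this subsequence, and then feed the resulting bound into Proposition~\ref{thm:RegretTransform}. Throughout I abbreviate $P_j := P(z_{\tau_j},\nabla F_{\tau_j,m}(z_{\tau_j}),\eta)$, since the entire right-hand side of Proposition~\ref{thm:RegretTransform} is built from these quantities, so it suffices to control $\sum_{j=0}^s\norm{P_j}^2$.

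First I would verify the identity that collapses the lazy scheme into genuine projected gradient descent. Because $z_{\tau_j+i}=z_{\tau_j}$ for all $i\in[m-1]$, the accumulated gradient used at the next update satisfies $m^{-1}g_{\tau_{j+1}} = m^{-1}\sum_{i=0}^{m-1}\nabla f_{\tau_j+i}(z_{\tau_j+i}) = m^{-1}\sum_{i=0}^{m-1}\nabla f_{\tau_j+i}(z_{\tau_j}) = \nabla F_{\tau_j,m}(z_{\tau_j})$; moreover $z_{\tau_{j+1}-1}=z_{\tau_j}$. Hence line~4 of Algorithm~\ref{alg:OLnonconvex} reads $z_{\tau_{j+1}} = \Pi_C\big(z_{\tau_j}-\eta\nabla F_{\tau_j,m}(z_{\tau_j})\big) = z_{\tau_j}-\eta P_j$. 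Next I would establish a per-step descent inequality: since each $f_t$ has $Q$-Lipschitz gradient (A3), the average $F_{\tau_j,m}$ inherits a $Q$-Lipschitz gradient, so the standard quadratic upper bound gives $F_{\tau_j,m}(z_{\tau_{j+1}}) \le F_{\tau_j,m}(z_{\tau_j}) + \inner{\nabla F_{\tau_j,m}(z_{\tau_j}),\,z_{\tau_{j+1}}-z_{\tau_j}} + \tfrac{Q}{2}\norm{z_{\tau_{j+1}}-z_{\tau_j}}^2$. Substituting $z_{\tau_{j+1}}-z_{\tau_j}=-\eta P_j$ and using Lemma~\ref{lemma:ProjectionInnerProdLowerBound} to bound $\inner{\nabla F_{\tau_j,m}(z_{\tau_j}),P_j}\ge\norm{P_j}^2$ yields $c\,\norm{P_j}^2 \le F_{\tau_j,m}(z_{\tau_j})-F_{\tau_j,m}(z_{\tau_{j+1}})$ with $c:=\eta(1-\tfrac{Q\eta}{2})$, which is strictly positive exactly because $\eta\in(0,\tfrac{2}{Q})$.

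The crucial step, and the one I expect to be the main obstacle, is telescoping these inequalities even though the objective switches from $F_{\tau_j,m}$ to $F_{\tau_{j+1},m}$ at every update. I would split $F_{\tau_j,m}(z_{\tau_j})-F_{\tau_j,m}(z_{\tau_{j+1}}) = \big(F_{\tau_j,m}(z_{\tau_j})-F_{\tau_{j+1},m}(z_{\tau_{j+1}})\big) + \big(F_{\tau_{j+1},m}(z_{\tau_{j+1}})-F_{\tau_j,m}(z_{\tau_{j+1}})\big)$: the first bracket telescopes, while the second is the mismatch between consecutive averaged objectives at the \emph{same} point, bounded by $2M$ via A1 (the average of functions bounded by $M$ is bounded by $M$). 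Summing over $j$ collapses the telescoping part to $F_{\tau_0,m}(z_{\tau_0})-F_{\tau_s,m}(z_{\tau_s})\le 2M$, and the mismatch terms contribute $O(s)$; absorbing the single boundary term $\norm{P_s}^2\le L^2$ (using $\norm{P_j}\le\norm{\nabla F_{\tau_j,m}(z_{\tau_j})}\le L$ from non-expansiveness of $\Pi_C$ and A2) gives $\sum_{j=0}^s\norm{P_j}^2 = O(s)$.

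Finally I would substitute into Proposition~\ref{thm:RegretTransform}. The first summand is $O(s)$ and the third, $(s+1)V_m$, is $O(s)$ because $m\le w$ makes $V_m$ a fixed constant independent of $T$. For the cross term I would apply Cauchy--Schwarz, $\sum_{j=0}^s\norm{P_j}\le\sqrt{(s+1)\sum_{j=0}^s\norm{P_j}^2}=O(s)$, so $2\sqrt{mV_m}\sum_{j=0}^s\norm{P_j}=O(s)$ as well. Combining the three terms gives $\mathcal{R}_T = O(s) = O(T/m) = O(T)$, since $m$ is a constant, which completes the proof.
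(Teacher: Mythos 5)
Your proof is correct, and it follows the paper's skeleton for most of its length: the same reduction of the lazy scheme to projected gradient descent on the averaged objective $F_{\tau_j,m}$ (which the paper uses implicitly without comment --- your explicit verification that $m^{-1}g_{\tau_{j+1}} = \nabla F_{\tau_j,m}(z_{\tau_j})$ and hence $z_{\tau_{j+1}} = z_{\tau_j} - \eta P_j$ is a welcome addition, since this is precisely what licenses the paper's first display), the same descent inequality from the $Q$-Lipschitz gradient of $F_{\tau_j,m}$, the same application of Lemma~\ref{lemma:ProjectionInnerProdLowerBound}, and the same margin $\eta - \tfrac{Q\eta^2}{2} > 0$ from $\eta \in (0,\tfrac{2}{Q})$. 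Where you genuinely diverge is in how the per-update decrease $F_{\tau_j,m}(z_{\tau_j})-F_{\tau_j,m}(z_{\tau_{j+1}})$ is controlled. The paper bounds it \emph{per step} by $LD$, invoking assumption \textbf{A2} ($L$-Lipschitzness of $F_{\tau_j,m}$) together with compactness of $C$ (diameter $D$); this yields uniform constant bounds on each $\norm{P_j}^2$ and each $\norm{P_j}$, so all three sums in Proposition~\ref{thm:RegretTransform} are immediately $O(s)$ with no Cauchy--Schwarz needed. You instead telescope across updates, splitting off the mismatch $F_{\tau_{j+1},m}(z_{\tau_{j+1}})-F_{\tau_j,m}(z_{\tau_{j+1}})$ between consecutive averaged objectives and bounding it by $2M$ via \textbf{A1}; this is closer in spirit to the analysis of time-smoothed gradient descent in Hazan et al., and it buys independence from the diameter $D$ in this step (only boundedness of the losses is used), at the price of losing the per-iterate control of $\norm{P_j}$ --- hence your extra Cauchy--Schwarz step for the cross term and the separate boundary estimate $\norm{P_s}^2 \leq L^2$ via non-expansiveness of $\Pi_C$, both of which are handled correctly. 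Both routes deliver $\mathcal{R}_T \leq O(s) = O(T/m) = O(T)$; the aggregate constants differ ($LD$ per update for the paper versus $2M$ per update for you), and neither dominates in general.
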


\begin{proof}
	From assumption \textbf{A3}, $\nabla f_t(z)$ is $Q$-Lipschitz, therefore $\nabla F_{t,m}(z)$ is also $Q$-Lipschitz for all $t,m$. For any $\tau_j$,
	
	\begin{equation}
		\begin{aligned}
		F_{\tau_j,m}(z_{\tau_{j+1}}) 
		& \leq F_{\tau_j,m}(z_{\tau_j}) - \eta \inner{ \nabla F_{\tau_j,m}(z_{\tau_j}) , P\big(z_{\tau_j},\nabla F_{\tau_j,m}(z_{\tau_j}),\eta \big) } + \frac{Q\eta^2}{2} \norm{P(z_{\tau_j},\nabla F_{\tau_j,m}(z_{\tau_j}) ,\eta)}^2_2
		\end{aligned} 
	\end{equation}
	Applying Lemma \ref{lemma:ProjectionInnerProdLowerBound},
	\begin{equation}
		\begin{aligned}
		F_{\tau_j,m}(z_{\tau_{j+1}}) &\leq F_{\tau_j,m}(z_{\tau_j}) - \eta \norm{P(z_{\tau_j},\nabla F_{\tau_j,m}(z_{\tau_j}),\eta)}^2_2 + \frac{Q\eta^2}{2} \norm{P(z_{\tau_j},\nabla F_{\tau_j,m}(z_{\tau_j}),\eta)}^2_2
		\end{aligned} 
	\end{equation}
	Rearrange the terms, 
	\begin{equation}
	{
		\begin{aligned}
		\norm{P(z_{\tau_j},\nabla F_{\tau_j,m}(z_{\tau_j}),\eta)}^2_2 &\leq \frac{\Big[  F_{\tau_j,m}(z_{\tau_j}) - F_{\tau_j,m}(z_{\tau_{j+1}}) \Big]}{(\eta - \frac{Q\eta^2}{2}) } 
		\end{aligned} 
	}
	\end{equation}
	From assumption \textbf{A2}, $F_{\tau_j,m}(z)$ is $L$-Lipschitz. Since $C\subset \mathbb{R}^n$ is compact, let $D:= \max_{u,v \in C}\norm{u-v}_2$ denote the diameter of $C$. We have 
	\begin{equation}
	\label{eq:pgSquareBound}
	{
		\norm{P(z_{\tau_j},\nabla F_{\tau_j,m}(z_{\tau_j}),\eta)}^2_2 \leq LD(\eta - \frac{Q\eta^2}{2})^{-1} 
	}
	\end{equation}
	\begin{equation}
	\label{eq:pgNormBound}
	{
		\norm{P(z_{\tau_j},\nabla F_{\tau_j,m}(z_{\tau_j}),\eta)}_2 \leq L^{1/2}D^{1/2}(\eta - \frac{Q\eta^2}{2})^{-1/2} 
	}
	\end{equation}
	Summing up all $j=0$ to $s$, plugging the bounds (\ref{eq:pgSquareBound}) and (\ref{eq:pgNormBound}) into Proposition \ref{thm:RegretTransform}, and from assumption $\textbf{A4}$,
	\begin{equation}
	\begin{aligned}
	\mathcal{R}_T &\leq (T/m) LD(\eta - \frac{Q\eta^2}{2})^{-1}  + (2T\sqrt{V_m}/\sqrt{m}) L^{1/2}D^{1/2}(\eta - \frac{Q\eta^2}{2})^{-1/2} + TV_m/m  \\
	& \leq O(T). 
	\end{aligned}
	\end{equation}
	Hence the regret bound is proved as claimed. $\qedsymbol$
\end{proof}

\section{Experiments}
\label{sec:exp}
We conduct experiments to evaluate the proposed online hyperparameter learning method on synthetic and real data. On the real data, we perform 15-minutes-ahead traffic flow prediction on 13 randomly-selected sensors of different types, distributed along the I-210~\cite{I210ICM} highway in California. The primary goals of the experiments are to

\begin{enumerate}
	\item Test whether our proposed Online Hyperparameter Learning (OHL) method can \emph{adaptively learn} the hyperparameters, given a misspecified starting point. 
	
	\item Examine the \emph{computational efficiency} of OHL against other model tuning methods.
	
	\item Compare the traffic flow \emph{prediction accuracy} of OHL with other hyperparameter tuning strategies with multiple kernel models. 
\end{enumerate}

\subsection{Synthetic data}
We generate a function with periodicity and linear trends using the following scheme:
\begin{equation}
\begin{aligned}
y(t) = 1 + c_1 \text{AR}(20)  + c_2 \sin(t/\omega)
\end{aligned}
\end{equation}
Here, $c_1 = c_2 = 0.5$, $\omega=5$, the autoregressive coefficients are $\boldsymbol{\alpha}_i/2\norm{\boldsymbol{\alpha}}$ with $\boldsymbol{\alpha}_i=i$. We compare the ground truth with one-step-ahead predictions produced by kernel methods under fixed hyperparameters and under our OHL algorithm. In both cases, the initial hyperparameters are the same. We set $m=10$ and $\eta=0.001$ in Algorithm~\ref{alg:onlineHPLRidge}.

We test two kernels, a squared exponential kernel and the combination of a linear kernel and a periodic kernel. The initial kernel scale is $\nu=0.1$ for the square exponential kernel. When this choice of kernel scale is fixed and used for prediction, the prediction produces a zigzagging line, indicating the kernel scale is misspecified (green line in Figure~\ref{fig:syn_exp}). With OHL, the predictions overlap with the fixed hyperparameter case initially, given the same starting hyperparameters, but aligns much closer to ground truth after 400 time-steps. OHL is tested for multiple kernel learning using periodicity and linear kernel. The initial hyperparameters are $\beta_{\text{prd}}=1$, period $\omega=5$ and scale $\nu=10$ for the periodicity kernel, and $\beta_{\text{lin}}=0$ for the linear kernel. Hence, we expect in the fixed hyperparameter case, periodicity can be reproduced but linear trend will be hard to capture. As seen in the right chart of Figure~\ref{fig:syn_exp}, there is an equidistant gap between the ground truth and the predictions under fixed hyperparameters. With OHL, the learner soon discovers the autoregressive drift term and the predictions are almost perfectly aligned with ground truth (right chart of Figure~\ref{fig:syn_exp}). The synthetic experiments demonstrate that OHL can perform adaptive learning, which allows the users to initiate the system without the costly tuning process.
\begin{figure}[!t]
	\centering
	\includegraphics[width=0.47\textwidth]{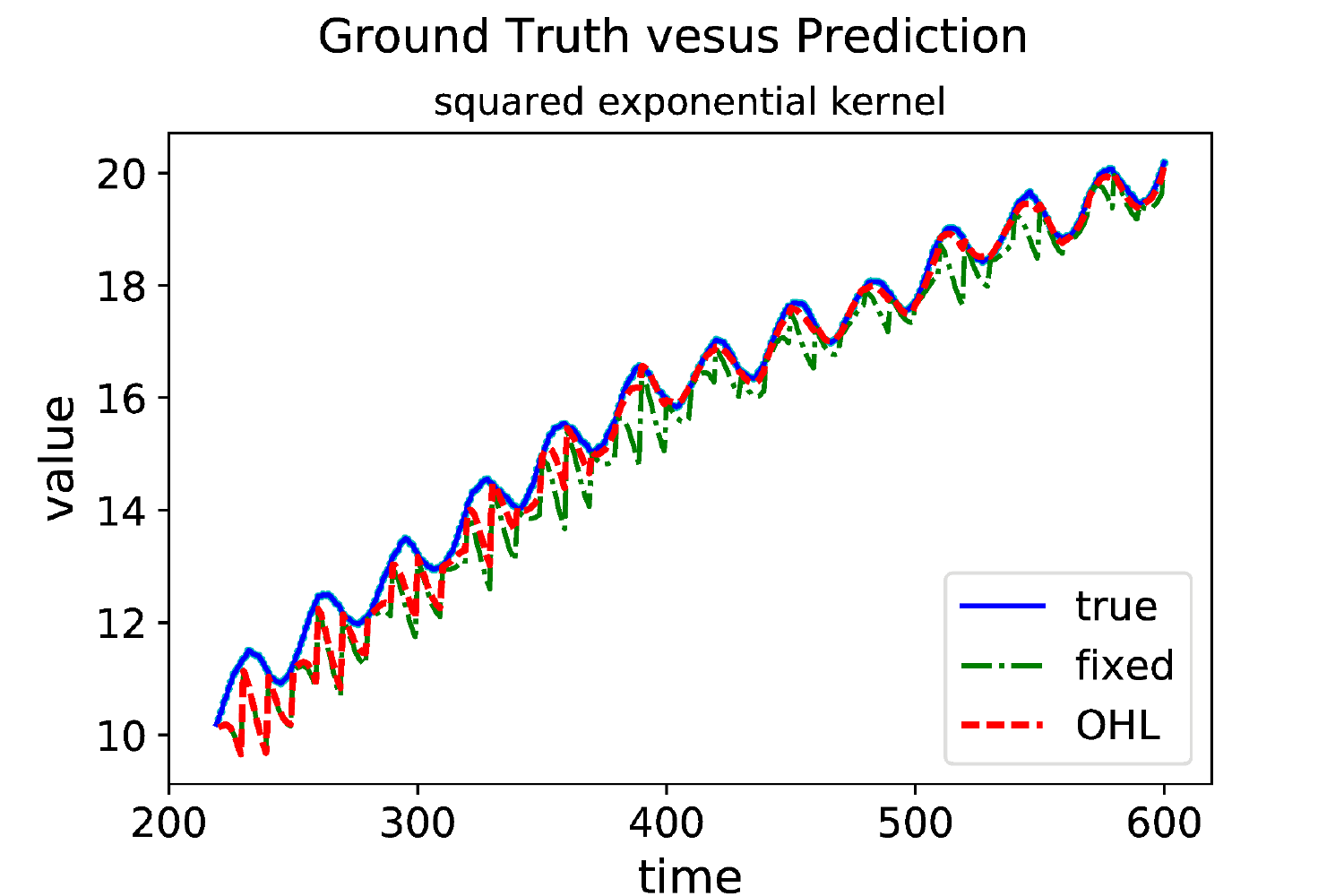}
	\includegraphics[width=0.47\textwidth]{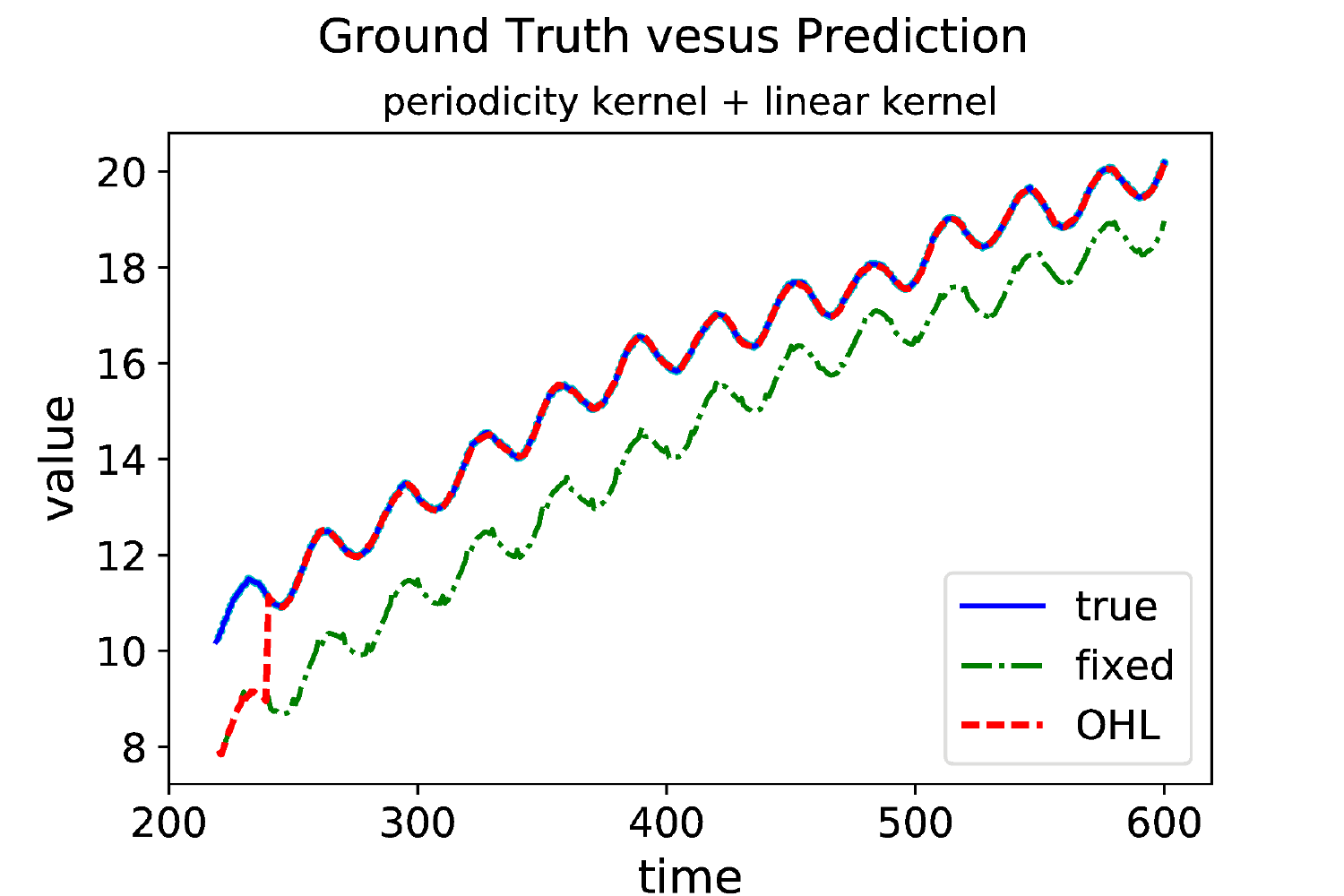}
	\caption{\small{Comparing OHL and FIXED on synthetic data. Left: squared exponential kernel, Right: Combination of periodicity and linear kernel. OHL (red line) adaptively learns towards ground truth, even when the initial hyperparameters are mis-specified.}}
	\label{fig:syn_exp}
\end{figure}

\subsection{I-210 Traffic Data}

\subsubsection{Data and Setup}
The I-210 highway is a vital route in the San Gabriel Valley region of the Los Angeles metropolitan area~\cite{I210ICM}. We use sensor data from thirteen randomly selected  locations covering both mainline detectors and ramp detectors. Measurements from January 1 to May 16 of 2017 are used. The raw data were binned using a 15 minute time window. Hence, there are 96 observations per day. 15-minutes-ahead predictions are tested. 

\begin{figure*}[ht!]
	\centering
	\includegraphics{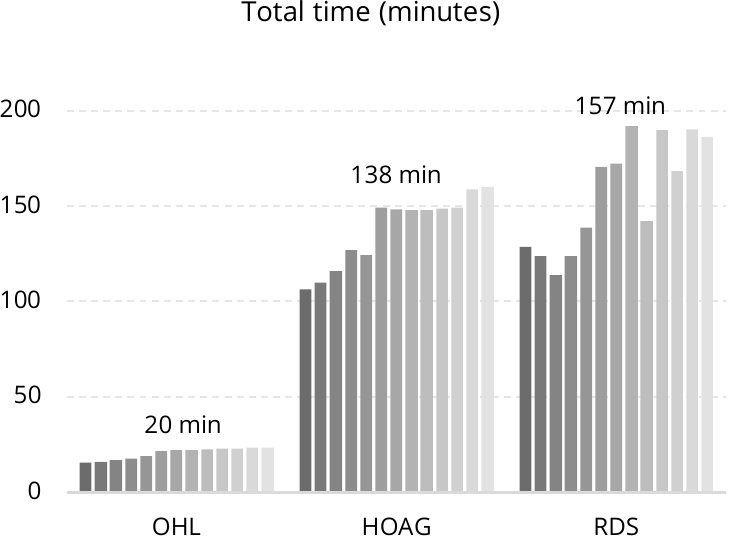}~~~~
	\includegraphics{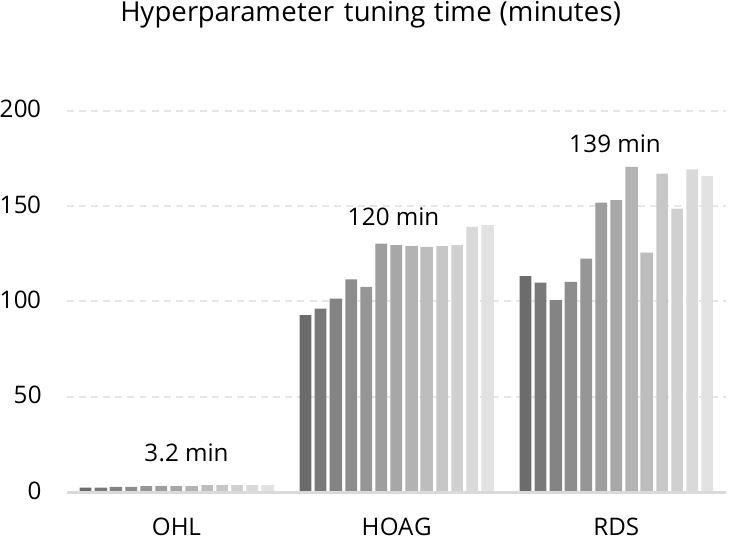}
	\caption{\small{Comparing total time and hyperparameter tuning time for OHL, HOAG, and RDS. Each bar indicates the time taken for a detector, and the labels above bars indicate average time for 13 detectors. Detectors are ordered by total time using OHL.}}
	\label{fig:runtime}
\end{figure*}

\setlength{\parskip}{0cm}
\setlength{\parindent}{1em}
We compare the computational efficiency and prediction accuracy of hyperparameter configurations tuned by OHL versus other algorithms on multiple kernel regression. A linear combination of the ARD kernel and periodic kernel is used. Flow data from past 20 time-steps are used as autoregressive features. The feasibility sets for the hyperparameters are $[1.5*10^{-2},1.5*10^{-6}]$ for kernel scales, $[96/2,96*7]$ for periodicity in the periodic kernel, $[0.03,3]$ for regularization constant. The following hyperparameter tuning algorithms are tested:
\begin{itemize}
	\item The Online Hyperparameter Learning (OHL) proposed in this paper.
	\item Hyperparameter Optimization with Approximate Gradient (HOAG) \cite{pedregosa2016hyperparameter}, a state-of-art gradient-based hyperparameter optimization method.
	\item Random Search (RDS) \cite{bergstra2012random}.
	\item Grid Search and fixed hyperparameters (FIXED), a baseline. 
\end{itemize}

Our experimental procedure aims to mimic the application scenario of a traffic prediction engine. Note that both HOAG and RDS are offline tuning strategies. Therefore, to apply HOAG and RDS in a running environment, the traffic operators need to re-run these tuning strategies periodically, as described in the rolling protocol (Algorithm~\ref{rollingprotocol}) in the introduction. Hence, we set the hyperparameter optimization interval for HOAG and RDS as $n=96*7$, which corresponds to weekly model tuning. At each hyperparameter tuning step, the validation set $V_t$ consists of observations in the past one month, and these are given to the tuning algorithm. The tuning algorithms HOAG and RDS then perform backtesting on the validation data $V_t$. HOAG uses hyperparameter gradient information to guide the search for optimal hyperparameters on $V_t$~\cite{pedregosa2016hyperparameter}. RDS experiments with previously selected configuration and 50 additional random configurations on the validation dataset $V_t$; the one offering the best backtesting accuracy is used for the next period. 

RDS is simple to implement and often produces good hyperparameter tuning results. It is thus widely used in practice\cite{bergstra2012random}. After the model hyperparameters are selected for the weekly interval, the model is trained with a training set of $|S_t|=2880$ time-steps.

OHL is an online method, taking the streaming data and adaptively updating the hyperparameters. Therefore, it  does not require backtesting with the validation data $V_t$. The hyperparameter learning rate is set to $\eta=10^{-4}$ in OHL, and $m$ is set to 96. Therefore, OHL computes the hyperparameter gradient online and makes an adaptive update every 96 steps. We keep the training frequency and amount of training data the same for all methods.  

\subsubsection{Computational Efficiency Comparisons}

Figure~\ref{fig:runtime} gives the total time and the hyperparameter tuning time for the methods OHL, HOAG, and RDS, and for the 13 detectors. Overall, OHL is nearly $\mathbf{7\times}$ faster than HOAG and RDS. The speedup is mainly due to the faster hyperparameter tuning in OHL compared to HOAG and RDS. The average hyperparameter tuning times for OHL and HOAG are 3.2 and 120 minutes, respectively, indicating a tuning speedup of $37.5\times$. The time spent on hyperparameter selection in OHL is a small percentage of the total running time, and hence additional computational overheads are not introduced compared to predictions under FIXED hyperparameters. The dramatic speedup in OHL over the slow execution of HOAG and RDS is expected: although HOAG and RDS are both good hyperparameter tuning algorithms for I.I.D. setting, the rolling procedure for time-series prediction applies the tuning algorithms periodically according to the operation schedule (weekly in our experiments). Each run of the hyperparameter optimization algorithms requires backtesting on the validation set, which also in turn involves multiple parameter fitting steps corresponding to different hyperparameters. Even though HOAG uses gradient-based optimization, the algorithm searches for the optimal solution of hyperparameters on $V_t$ during each tuning stage. In comparison, OHL extracts the hyper-gradient knowledge adaptively from the streaming data, and making a single projected-gradient update to the hyperparameters before re-training the model. Further, the overall times (average across 13 detectors) for OHL and HOAG are 20 and 138 minutes, indicating a $6.9\times$ speedup.

\subsubsection{Prediction Accuracy}
\begin{figure*}[ht!]
	\centering
	\includegraphics[width=0.32\textwidth]{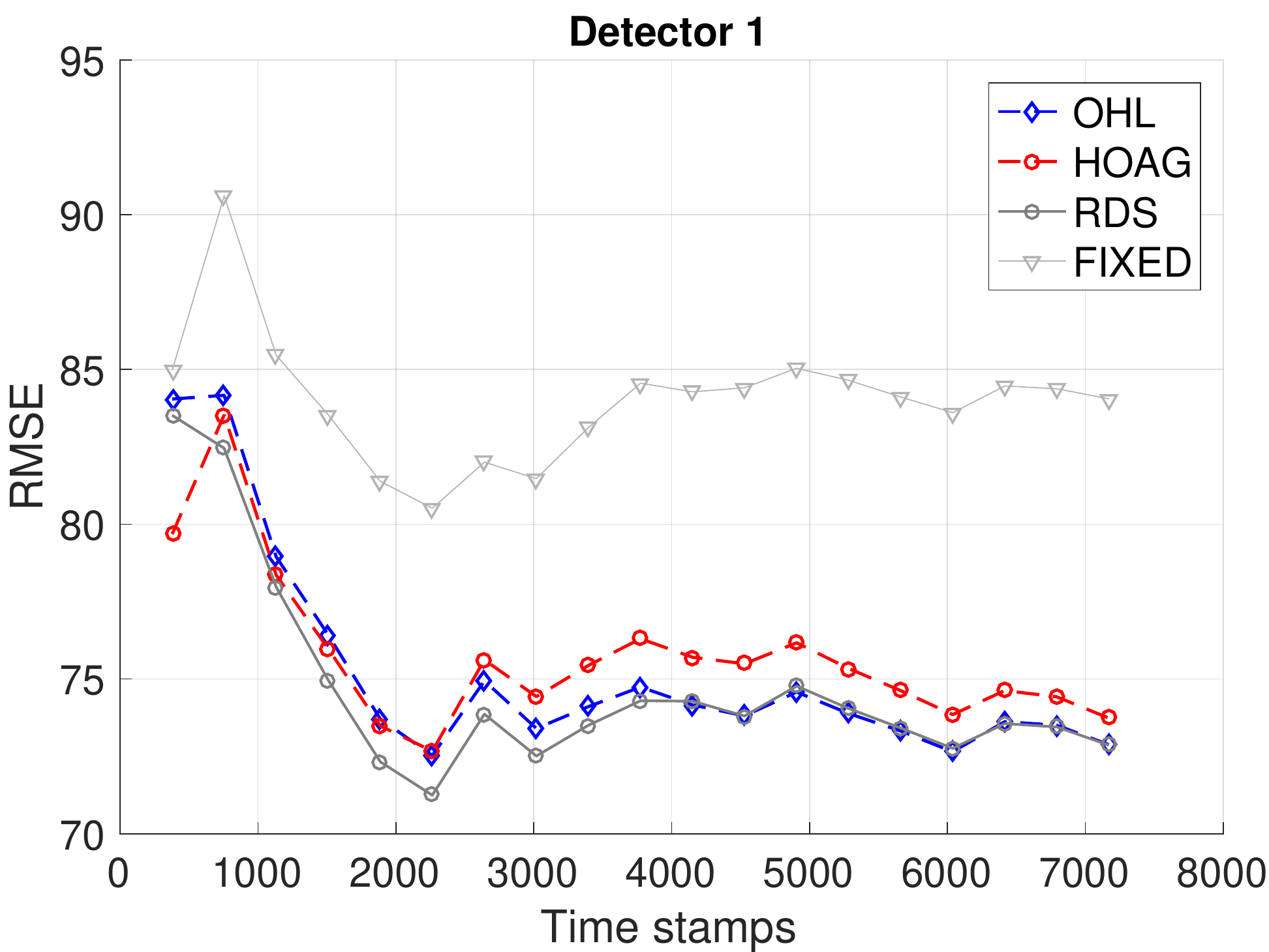}
	\includegraphics[width=0.32\textwidth]{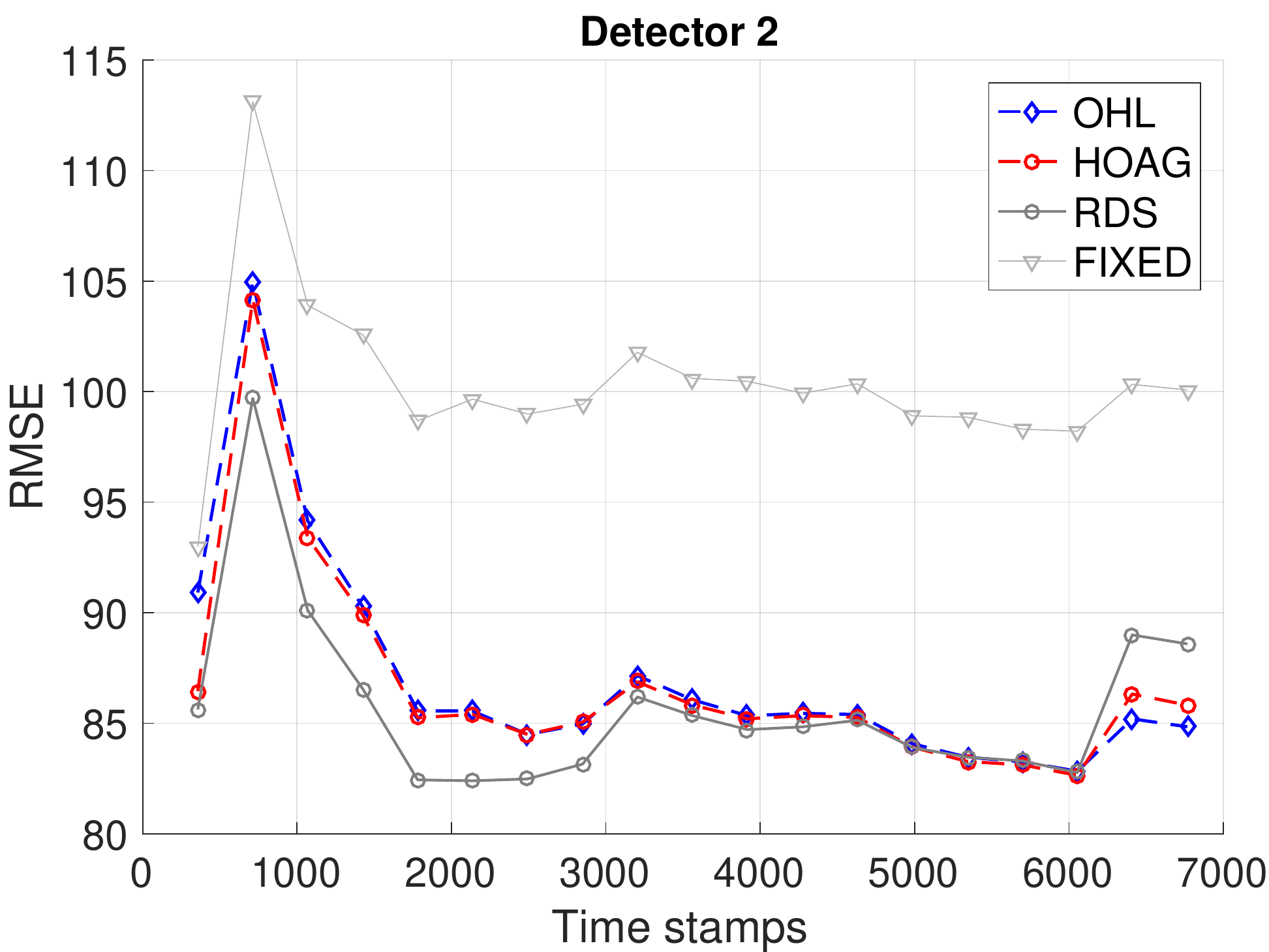}
	\includegraphics[width=0.32\textwidth]{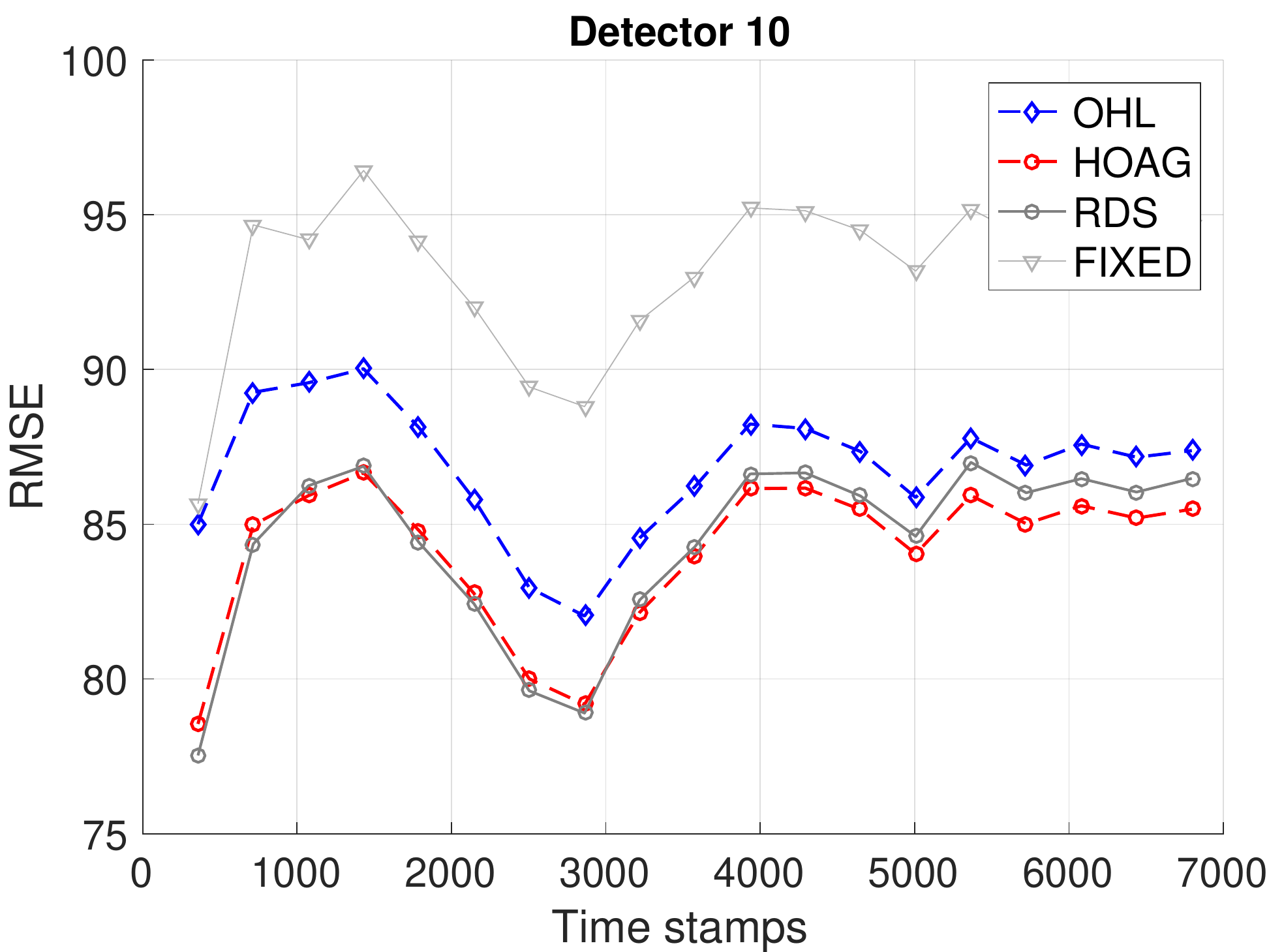}
	\includegraphics[width=0.32\textwidth]{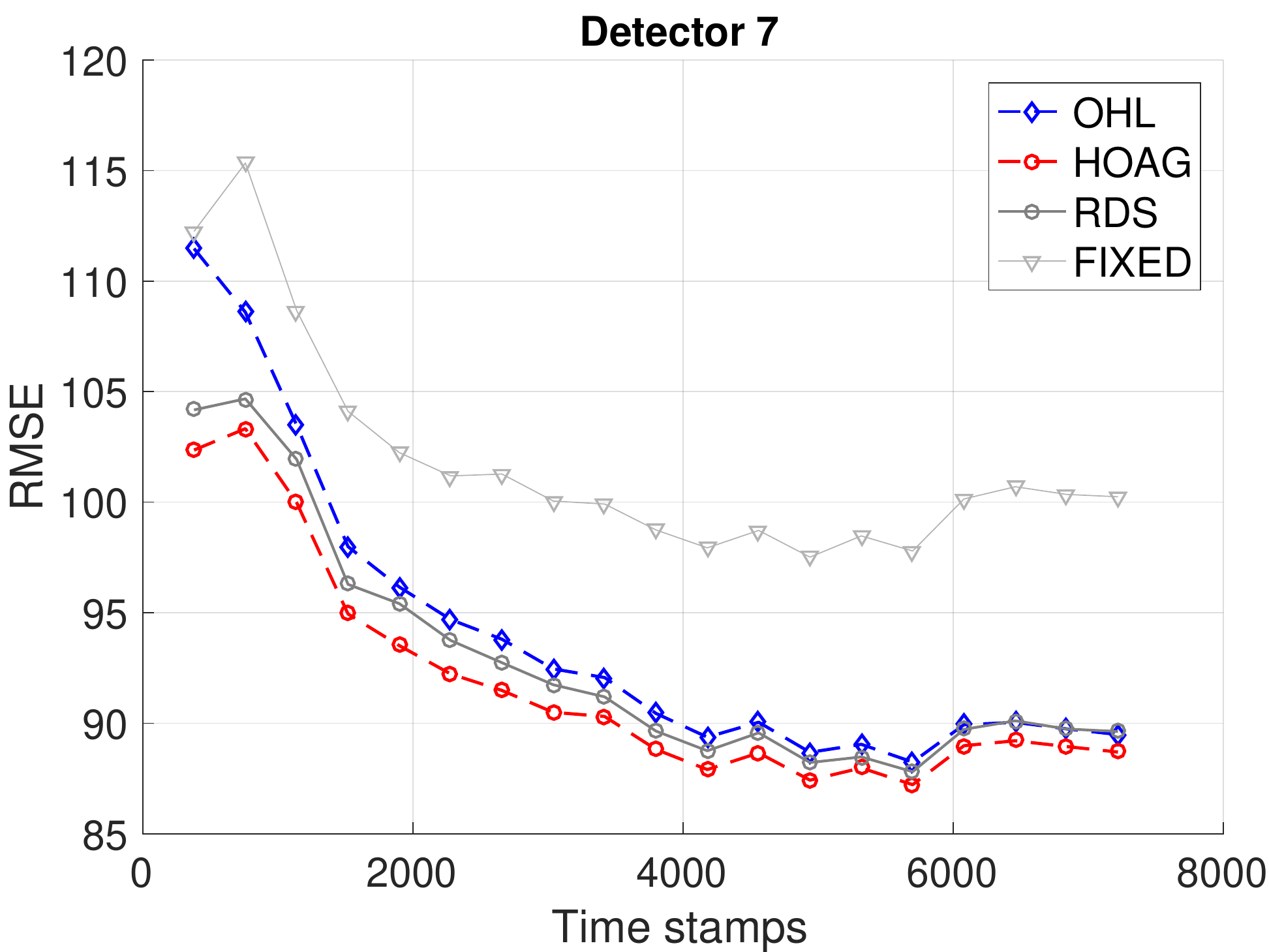}
	\includegraphics[width=0.32\textwidth]{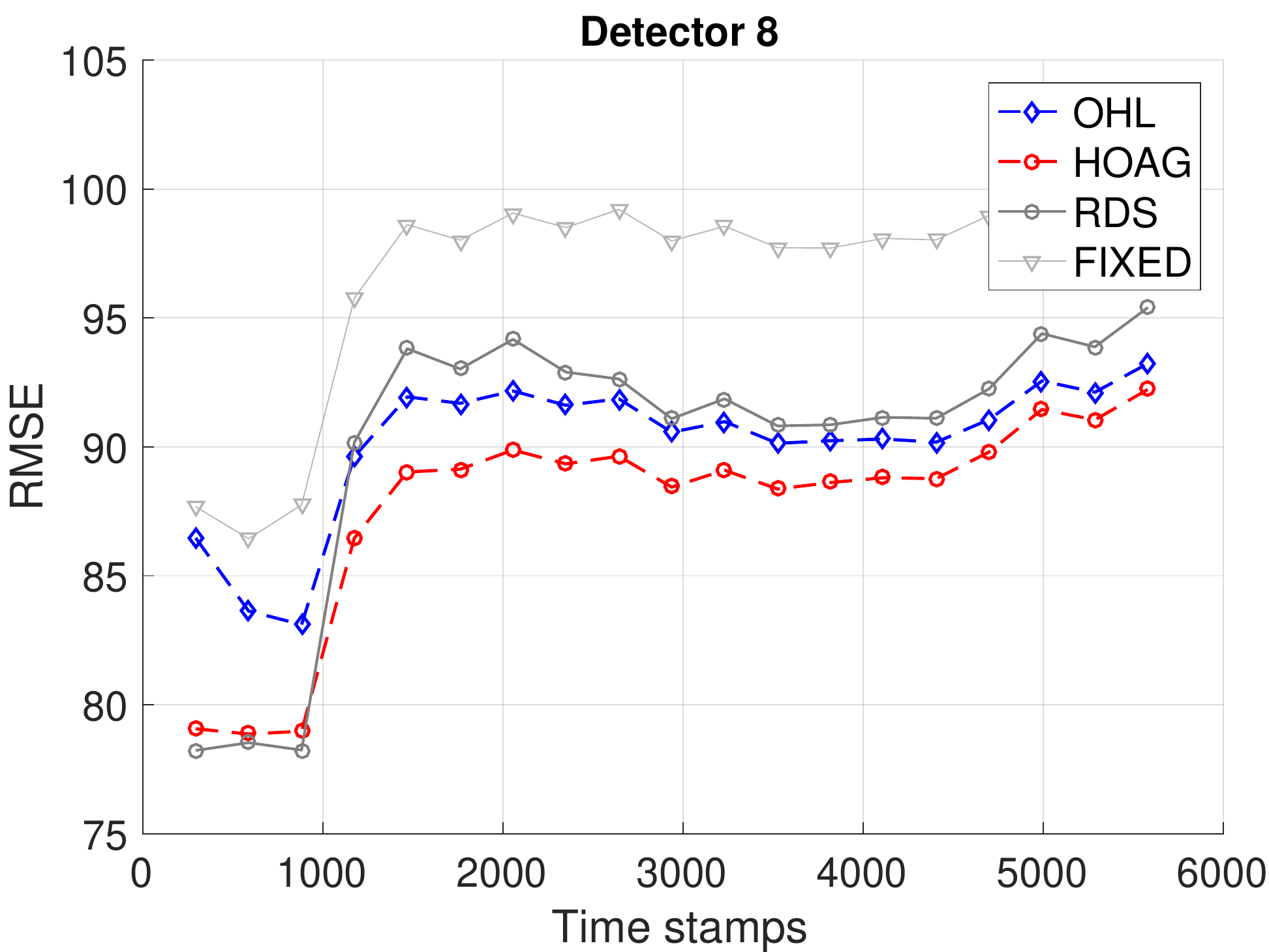}
	\includegraphics[width=0.32\textwidth]{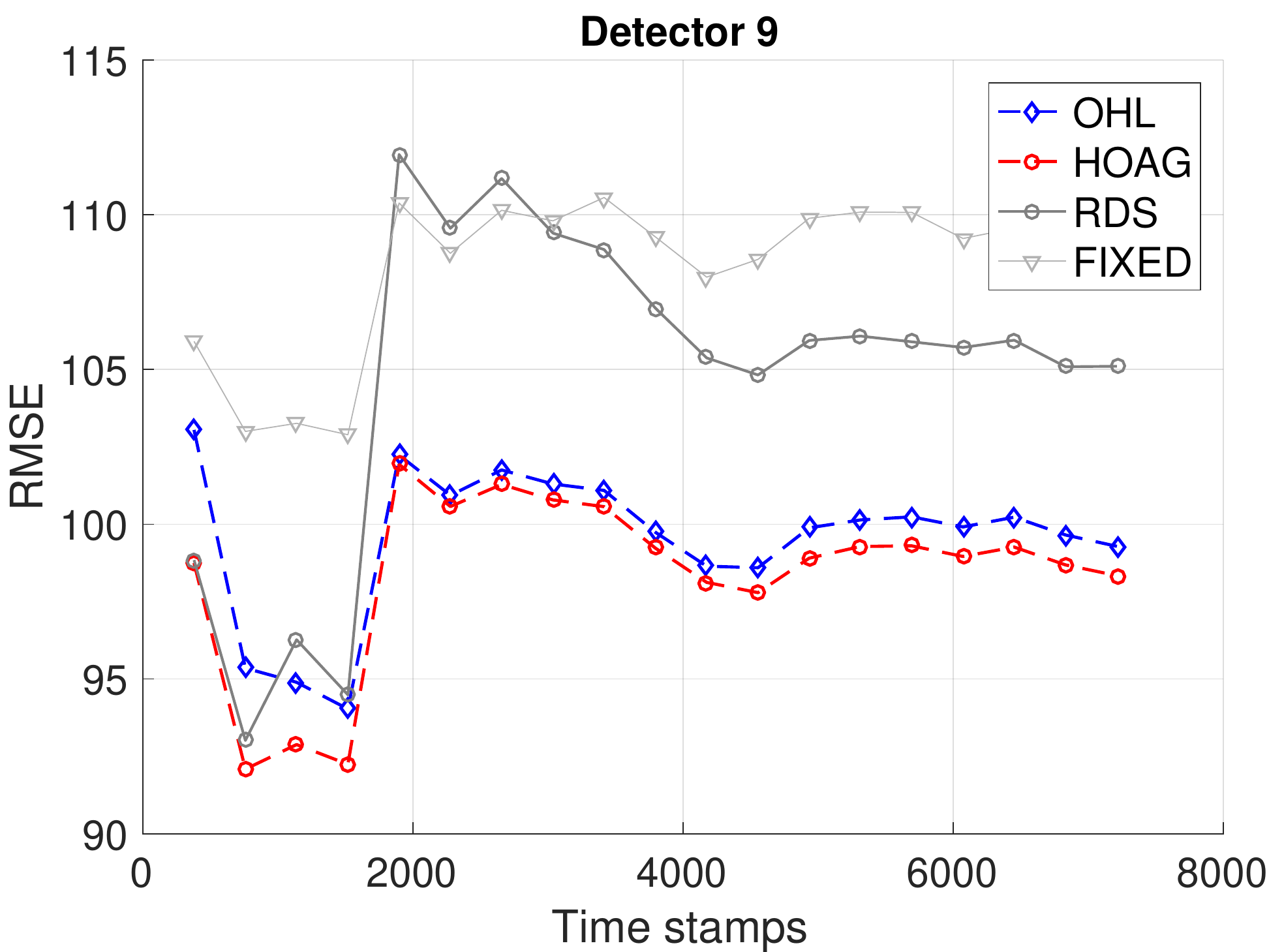}
	\includegraphics[width=0.32\textwidth]{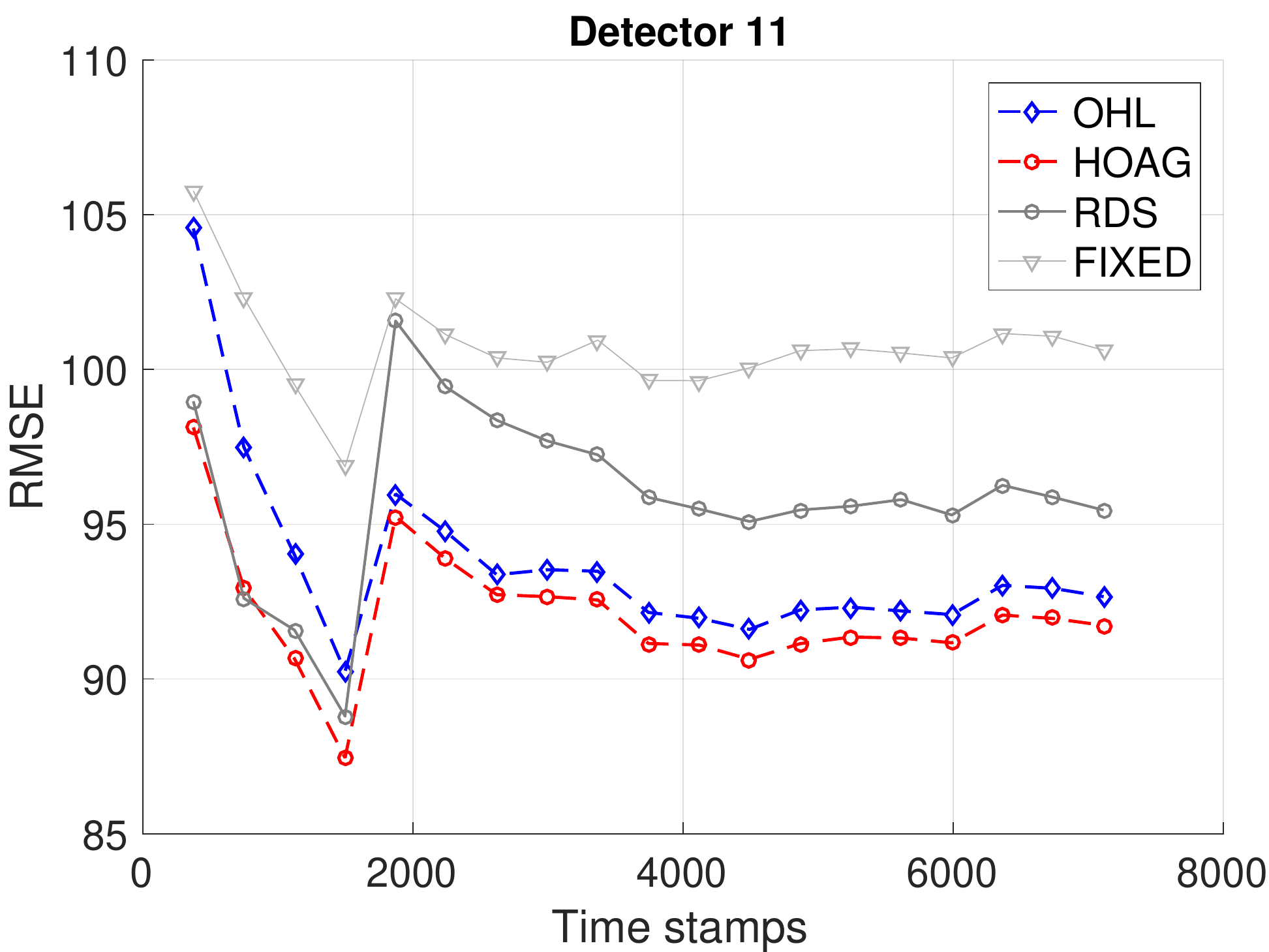}
	\includegraphics[width=0.32\textwidth]{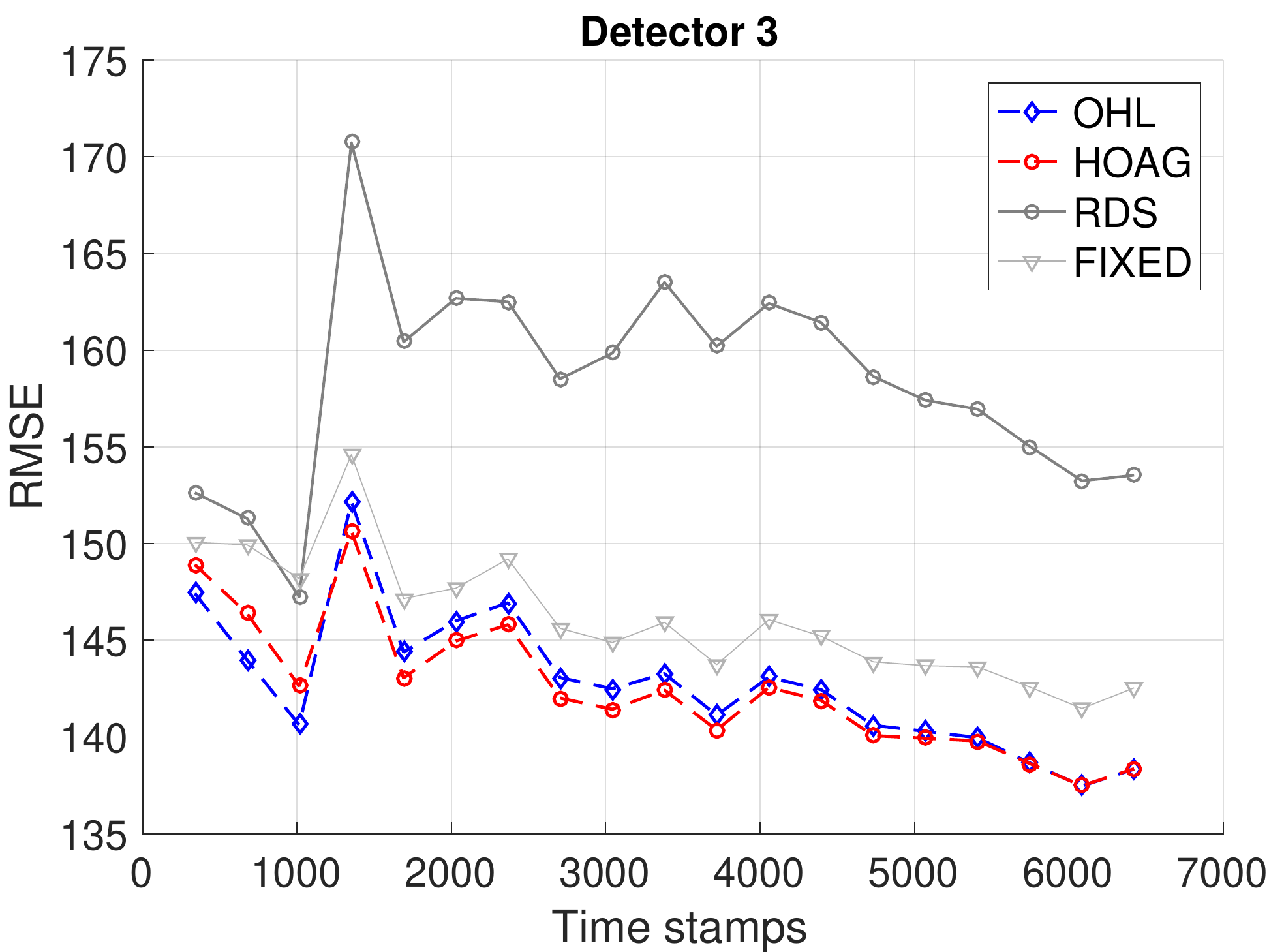}
	\includegraphics[width=0.32\textwidth]{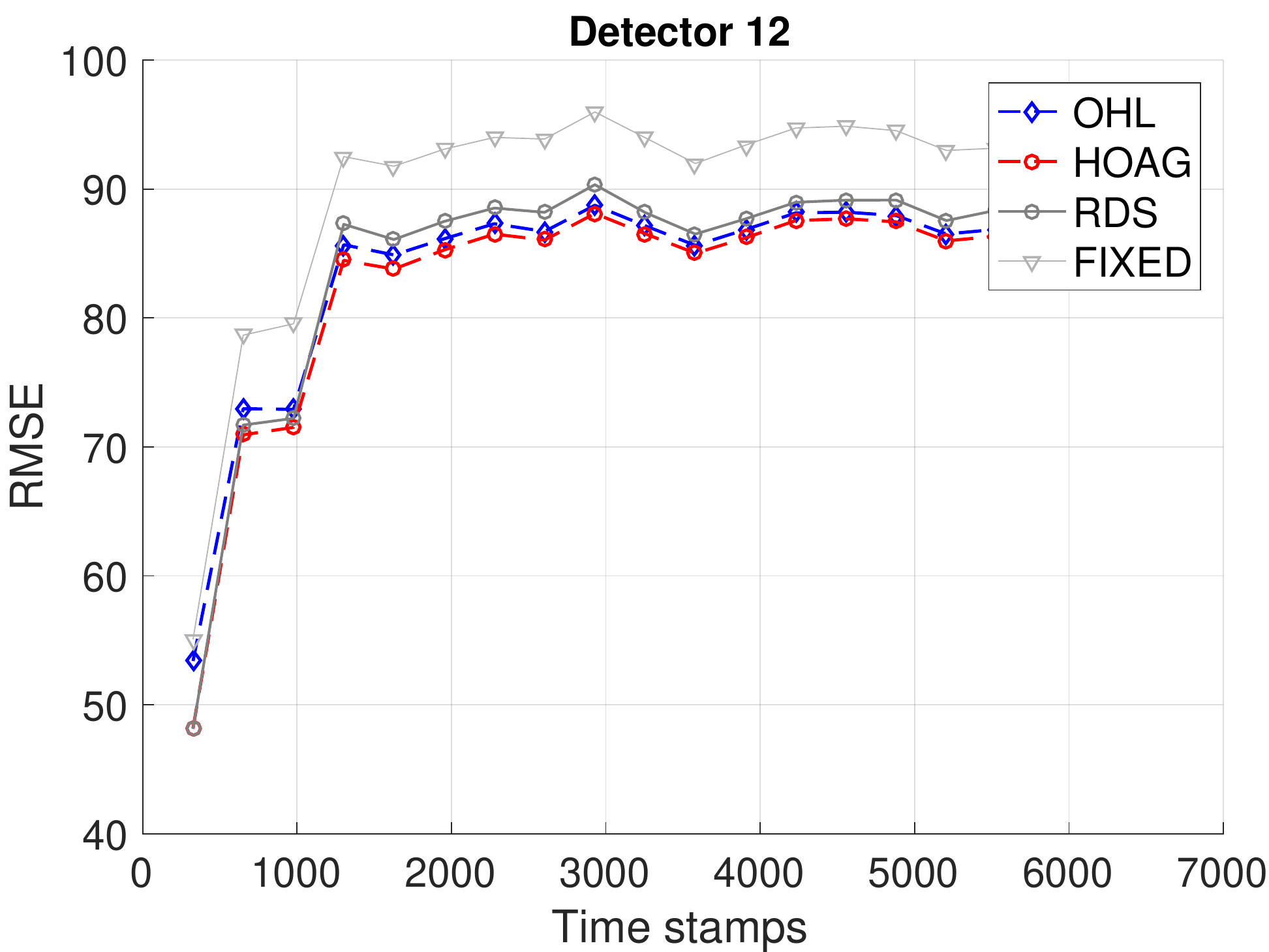}
	\includegraphics[width=0.32\textwidth]{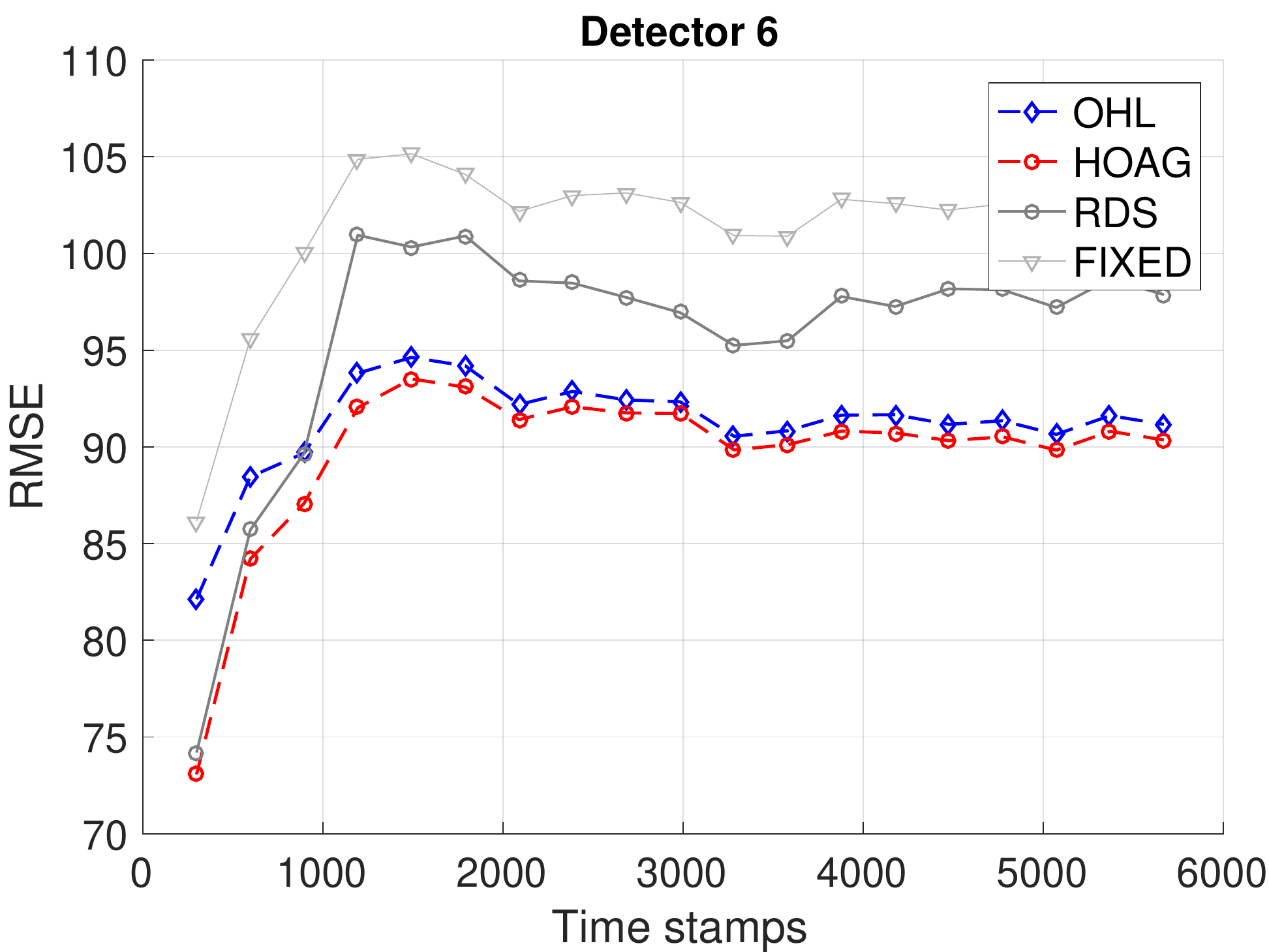}
	\includegraphics[width=0.32\textwidth]{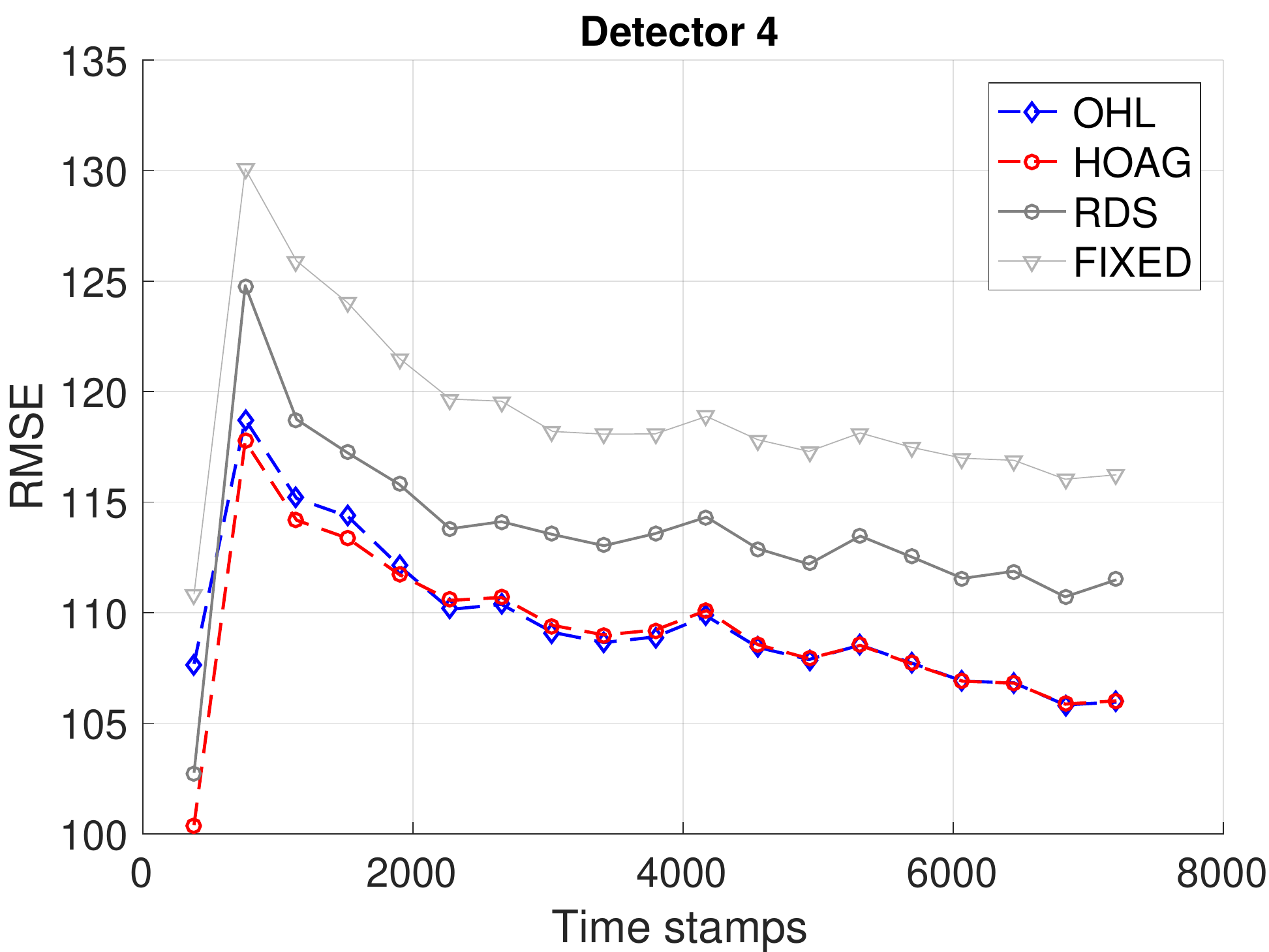}
	\includegraphics[width=0.32\textwidth]{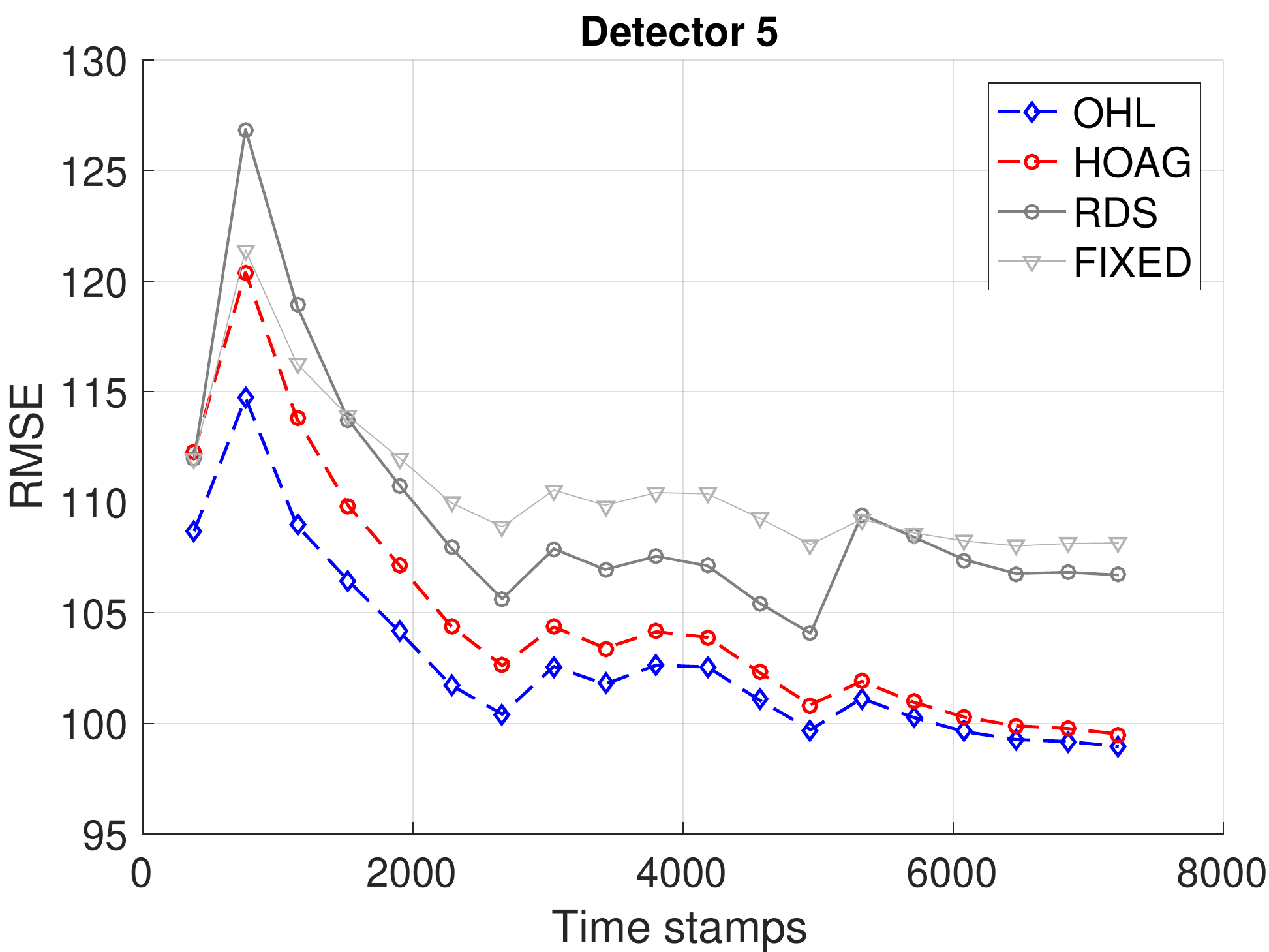}
	\includegraphics[width=0.32\textwidth]{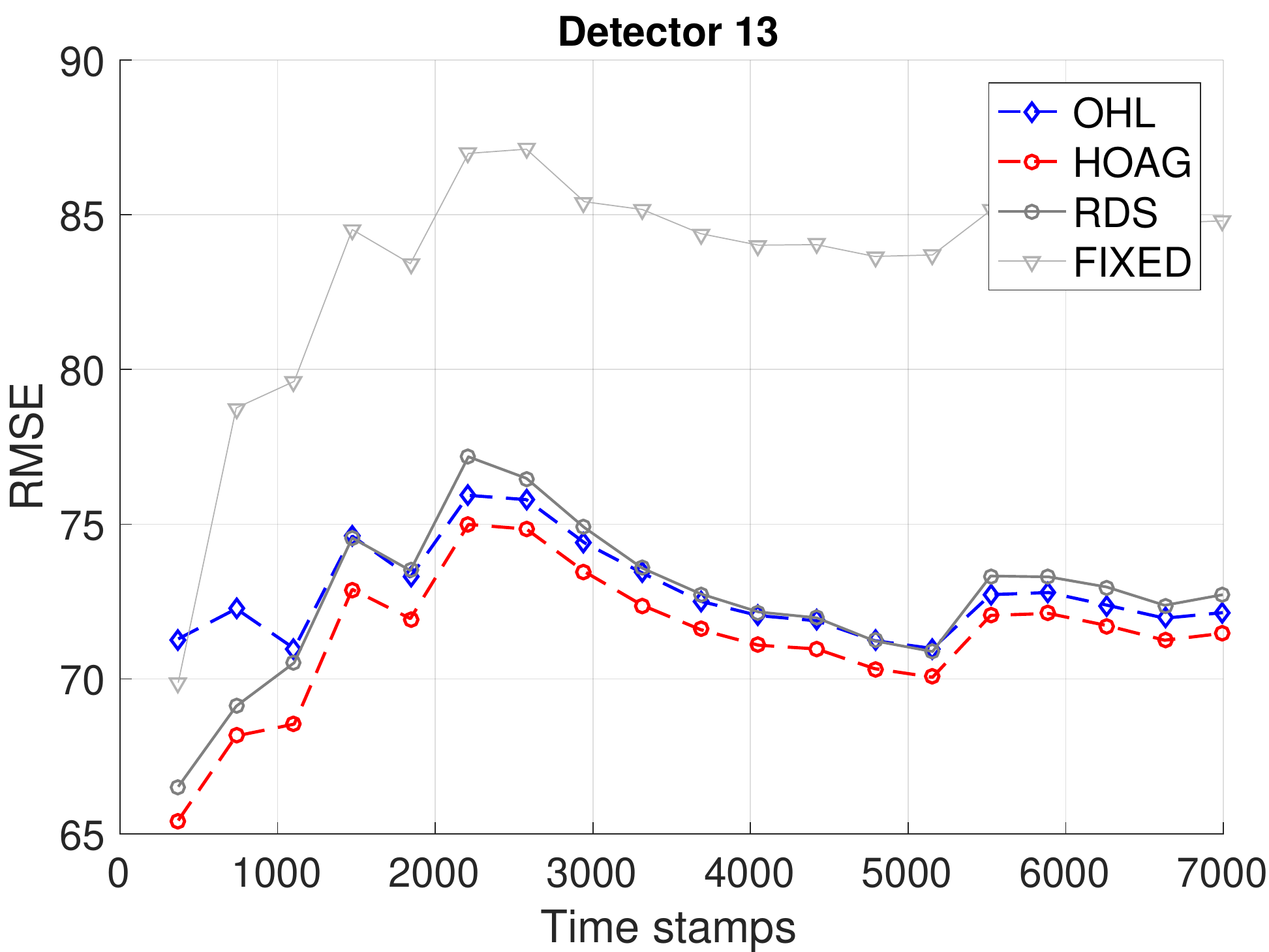}	
	\caption{\small{Prediction RMSE comparison for traffic flow data, OHL vs other hyperparameter tuning algorithms.}}	
\label{fig:accuracy}	
\end{figure*}

We use Root Mean Squared Error (RMSE) to measure the traffic flow prediction accuracy with different hyperparameter optimization algorithms. In order to examine the variation of RMSE over time, we also report the RMSE as a function of time-step $t$:
$$\RMSE(t):=\sqrt{\frac{1}{t}\sum^t_{\tau=0} (y_{\tau} - \hat{y}_{\tau})^2},$$
Thus, $\RMSE(t)$ summarizes the average prediction error from the start to time-step $t$. Due to space limit, we only show the evolution of $\RMSE(t)$ over the testing period for six detectors in Figure~\ref{fig:accuracy}. The X axis in Figure~\ref{fig:accuracy} is the prediction time-step and the Y axis corresponds to the RMSE up to that time-step. OHL, HOAG, and RDS have lower RMSE compared to the result with FIXED hyperparameters. This is an indication of suboptimal hyperparameters selected and then fixed by grid search. The prediction accuracy of OHL is similar to HOAG in most cases, and sometimes better. For example, on detector 1 and detector 2 (first two charts in the top row of Figure~\ref{fig:accuracy}), RDS has the lowest RMSE in the beginning phrase of testing, but OHL gradually improves and outperforms others over time. Meanwhile, OHL is also computationally the cheapest among the three hyperparameter tuning strategies. The similar prediction accuracy between HOAG and OHL in most cases are expected (Table~\ref{tab:accuracy-time}), since both use gradient-based optimization on hyperparameters. However, HOAG periodically applies the gradient-based iterations until convergence on the validation dataset, which makes the overall computation costly. In contrast, OHL achieves the same result with adaptive updates. On some detectors, RMSE of the model tuned by OHL algorithm is lower than HOAG - an offline gradient-based counter part. There are two reasons that can explain why an online HO algorithm performs better than an offline one. The optimal hyperparameters on validation set underperform in future data, suggesting that either there is \textit{overfitting by hyperparameters} or the data distribution is not stationary. On the contrary, OHL enables timely hyperparameter updates adapted to the latest observations.

\begin{table}[!h]
	\setlength{\tabcolsep}{2pt}
	\sisetup{ 
		table-number-alignment = center,
		table-format = 2.1,
		table-auto-round,
		zero-decimal-to-integer,
	}
	\centering
	\caption{RMSE percentage improvement relative to FIXED hyperparameters. Larger values are better. OHL achieves \emph{similar accuracy} to HOAG, and nearly $7\times$ faster (see Figure~\ref{fig:runtime}).}
	\begin{tabular}{|r|r|r|r|r|r|r|r|}
		\toprule
		& \multicolumn{7}{@{}c@{}|}{$\RMSE \text{ Improvements }$} \\  & \multicolumn{3}{@{}c@{}|}{4000 time-steps}  & & \multicolumn{3}{@{}c@{}|}{final} \\
		ID & OHL & HOAG & RDS & & OHL & HOAG & RDS  \\
		\hline 
		1 &12\%    &11\%   &12\% & 	& 13\% & 12\%  & 13\%   \\
		2 &15\%    &15\%   &15\% & & 15\%  & 14\% & 11\%  \\
		3 &2\%  & 2\%   &-11\% &  & 3\%  & 3\%  & -8\%  \\
		4 &8\%   &7\%    &4\%	 & & 9\% & 9\%  & 4\%  \\
		5  &8\%   &6\%   &3\% & & 9\% & 8\%  & 1\%\\
		6 &11\%    &11\%   &5\% & & 11\%  & 11\%  & 4\% \\
		7 &9\%  &10\%   &9\% & &	11\% & 12\%  & 11\% \\
		8 &8\% & 9\% &7\% & & 8\% & 9\%  & 6\% \\
		9 &8\% &  9\%   &2\% & &	9\%  & 10\% & 3\% \\
		10 &7\% &9\%   &9\% & &	8\%  & 10\%  & 9\%	 \\
		11 &8\%  & 8\% & 4\% & & 8\% & 9\% & 5\% \\
		12 &7\%  & 7\% & 6\% & & 7\% & 8\%  &5\% \\
		13 & 14\%  & 15\% &14\%	&  & 15\% &  16\%  & 14\%	\\
		\bottomrule
	\end{tabular}
	\label{tab:accuracy-time}
\end{table}

\section{Conclusions}
Motivated by the need for hyperparameter optimization in traffic time series prediction, we proposed the OHL algorithm and applied it on Multiple-Kernel Ridge Regression. The proposed OHL algorithm achieves optimal local regret. In the traffic flow prediction experiments, OHL is nearly $7\times$ faster than other rolling hyperparameter tuning methods, while achieving similar prediction accuracy. In addition, we observed a consistent improvement in accuracy compared to predictions produced with static hyperparameters. 

There are possible extensions to this work: efficient online hyper-gradient approximation methods for a general class of models can expand the application scope of OHL. One direction of improvement is combining our OHL algorithm with the reverse-mode and forward-mode computation of hyper-gradients~\cite{franceschi17a}. 

\section*{Acknowledgements}
This work is supported by the US National Science Foundation
grant ACI-1253881 and a Penn State College of Engineering seed grant.

\bibliography{refs}
\bibliographystyle{unsrt}

\end{document}